\newtheorem{corollary}{Corollary}
\renewcommand{\phi}{\varphi}
\renewcommand{\P}{\mathbb{P}}
\newcommand{\E}{\mathbb{E}}
\newcommand{\cL}{\mathcal{L}}
\def\ds1{\mathds{1}}
\renewcommand{\epsilon}{\varepsilon}
\newcommand{\argmax}{\mathop{\mathrm{argmax}}}
\newlength{\minipagewidth}
\newcommand{\bookbox}[1]{
\par\medskip\noindent
\framebox[\textwidth]{
\begin{minipage}{\minipagewidth}
{#1}
\end{minipage} } \par\medskip }
\newcommand{\beq}{\begin{equation}}
\newcommand{\eeq}{\end{equation}}
\newcommand{\beqa}{\begin{eqnarray}}
\newcommand{\eeqa}{\end{eqnarray}}
\newcommand{\beqan}{\begin{eqnarray*}}
\newcommand{\eeqan}{\end{eqnarray*}}
\def\ba#1\ea{\begin{align*}#1\end{align*}} %\ba = \begin{algin*}, \ea = \end{align*}
\def\banum#1\eanum{\begin{align}#1\end{align}} %\banum = \begin{algin}, \eanum
\newcommand{\nl}{n_{\ell}}
\newcommand{\el}{\epsilon_{\ell}}
\newcommand{\il}{i_{\ell}}
\newcommand{\Al}{A_{\ell}}
\newcommand{\hatmuli}{\widehat{\mu}_i(\ell)}
\newcommand{\hatmulil}{\widehat{\mu}_{i_{\ell}}(\ell)}
\newcommand{\Hn}{\mathbf{H}}
\newtheorem{thm}{Theorem}
\def\E{\mathbb{E}}
\def\P{\mathbb{P}}
\def\1{\mathbf{1}}
\newcolumntype{M}{>{$}c<{$}}
\newcommand{\norm}[1]{\left|\left|#1\right|\right|}
\begin{document}
				\title{On Finding the Largest Mean Among Many}
		\date{}

%\author{ Matthew Malloy  \\ \texttt{mmalloy@wisc.edu} \And Kevin Jamieson \\ \texttt{kgjamieson@wisc.edu} \And Robert Nowak  \\ \texttt{nowak@ece.wisc.edu} \And S\'{e}bastien Bubeck \\ \texttt{sbubeck@princeton.edu} }

\author{Kevin Jamieson$^\dagger$, Matthew Malloy$^\dagger$\footnote{The first two authors are listed in alphabetical order as both contributed equally.}, Robert Nowak$^\dagger$, and S\'{e}bastien Bubeck$^{\ddagger}$ \vspace{-.2cm}  \\
$^\dagger$Department of Electrical and Computer Engineering, \vspace{-.2cm} \\  University of Wisconsin-Madison \vspace{-.2cm} \\
$^\ddagger$Princeton University, \vspace{-.2cm}  \\
Department of Operations Research and Financial Engineering
  }	
		\maketitle

\begin{abstract}
Sampling from distributions to find the one with the largest mean arises in
a broad range of applications, and it can be mathematically modeled as a multi-armed bandit problem in which
each distribution is associated with an arm.
This paper studies the sample complexity of identifying the best arm (largest mean) in a multi-armed bandit problem. Motivated by large-scale applications, we are especially interested in identifying situations where the total number of samples that are necessary and sufficient to find the best arm scale linearly with the number of arms.  We present a single-parameter multi-armed bandit model that spans the range from linear to superlinear sample complexity. We also give a new algorithm for best arm identification, called PRISM, with linear sample complexity for a wide range of mean distributions. The algorithm, like most exploration procedures for multi-armed bandits, is adaptive in the sense that the next arms to sample are selected based on previous samples.  We compare the sample complexity of adaptive procedures with simpler non-adaptive procedures using new lower bounds.  For many problem instances, the increased sample complexity required by non-adaptive procedures is a polynomial factor of the number of arms.  
%Using new lower bounds we quantify the increased gains of adaptivity in the best arm problem. 
% it is shown that non-adaptive procedures can require significantly more samples than adaptive procedures.
\end{abstract}		
		
\section{Introduction}

This paper studies the sample complexity of finding the best arm in a multi-armed bandit problem.  Consider $n+1$ arms with mean payoffs $\mu_0 > \mu _1 > \dots > \mu_n$. The mean values and the ordering of the arms are unknown.  The goal is to identify the arm with the largest mean (i.e., the ``best arm'') by sampling the arms.   A sample of arm $i$ is an independent realization of a random variable $X_i \in [0,1]$ with mean $\mu_i \in [0,1]$ (it is straightforward to extend all results presented in this paper to sub-Gaussian realizations with bounded means and variances).  

The main focus of this paper is identifying necessary and sufficient conditions under which the sample complexity (total number of samples) of finding the best arm grows linearly in the number of arms.  
This is motivated by applications involving very large numbers of arms, such as virus replication experiments testing thousands of cell strains \cite{hao:08}, cognitive radio problems searching over hundreds of communication channels \cite{haykin2005cognitive, lopez2009ofdma}, and network surveillance of large social networks. These applications are time-consuming and/or costly, so minimizing the number of samples required to find the most influential genes, best channels, or malicious agents is crucial.  This paper quantifies the minimum number of samples needed in such applications and gives a new algorithm called PRISM that succeeds using a total number of samples within a negligible factor of the minimum.

Mannor and Tsitsiklis \cite{mannor2004sample} showed that for any procedure that finds the best arm with probability at least $1-\delta$ requires on the order of $\Hn \log(1/\delta)$ samples,  where $\Hn := \sum_{i=1}^n (\mu_0-\mu_i)^{-2}$.   This lower bound shows that the sample complexity can be much greater than the number of arms.  For example, if the gap between $\mu_0$ and $\mu_1$ is $1/n$, then $\Hn$, and the sample complexity, are at least $O(n^2)$.  On the other hand, scenarios can arise in which $\Hn$ grows linearly with $n$. For instance, if $\mu_0 -\mu_i$ is greater than a positive constant for all $i$, then $\Hn=O(n)$. This case is ``sparse'' in the sense that $\mu_0$ is bounded away from all others, and recent work has shown that $O(n)$ samples are sufficient in such cases \cite{malloy2012sequential}.  

Most bandit exploration algorithms are sequential and adaptive in the sense that the selection of the arms to sample next is based on previous samples. This is necessary in order to achieve linear sample complexity in the sparse case mentioned above.  Non-adaptive methods, in which every arm is sampled an equal number of times, require at least $O(n\log n)$\footnote{In this paper our focus is on how the number of samples scales with $n$, not necessarily the probability of failure $\delta$. For instance, if we were to say an algorithm requires just $O(n)$  samples then it is understood that this many samples suffices to find the best arm with a fixed probability of error. However, in the theorem statements the dependence on $\delta$ is explicit.} samples \cite{malloy2011limits}.  
The factor of $\log n$ is significant if $n$ is large, which is one motivation for adaptive strategies like the one used in \cite{hao:08}.  Contrasting the differences between adaptive and non-adaptive sampling is a second focus of this paper.  

Of particular interest here is the scaling of the sample complexity as a function of the number of arms and the behavior of the gaps between their means.   The sparse model discussed above is an enlightening idealization, but unlikely to arise in practice.  A smoothly decaying distribution of means may be a more reasonable model for the biological and radio applications discussed above. Fig.~\ref{fig1} depicts the means in three different arm configurations.  The left plot (a) represents a sparse model in which $\mu_0$ is bounded away from all others means by a fixed constant (all gaps greater than a fixed constant).  In this case, the sample complexities of non-adaptive and adaptive strategies differs by a factor of $\log n$.  The other two plots represent cases in which the gaps between means are shrinking as $n$ increases.
In Figure~\ref{fig1}(b) $\mu_0-\mu_i = (\frac{i}{n})^{.49}$ and in (c)  $\mu_0-\mu_i = \frac{i}{n}$. The difference between the sample complexities is much more significant for these non-sparse cases.  Also note that there are non-sparse cases in which adaptive strategies can find the best arm in $O(n)$ samples like the one shown in Fig.~\ref{fig1}(b).  

\begin{figure}[h]
\centering
\centerline{
\begin{tabular}{ccc}
 \includegraphics[width=6.6cm]{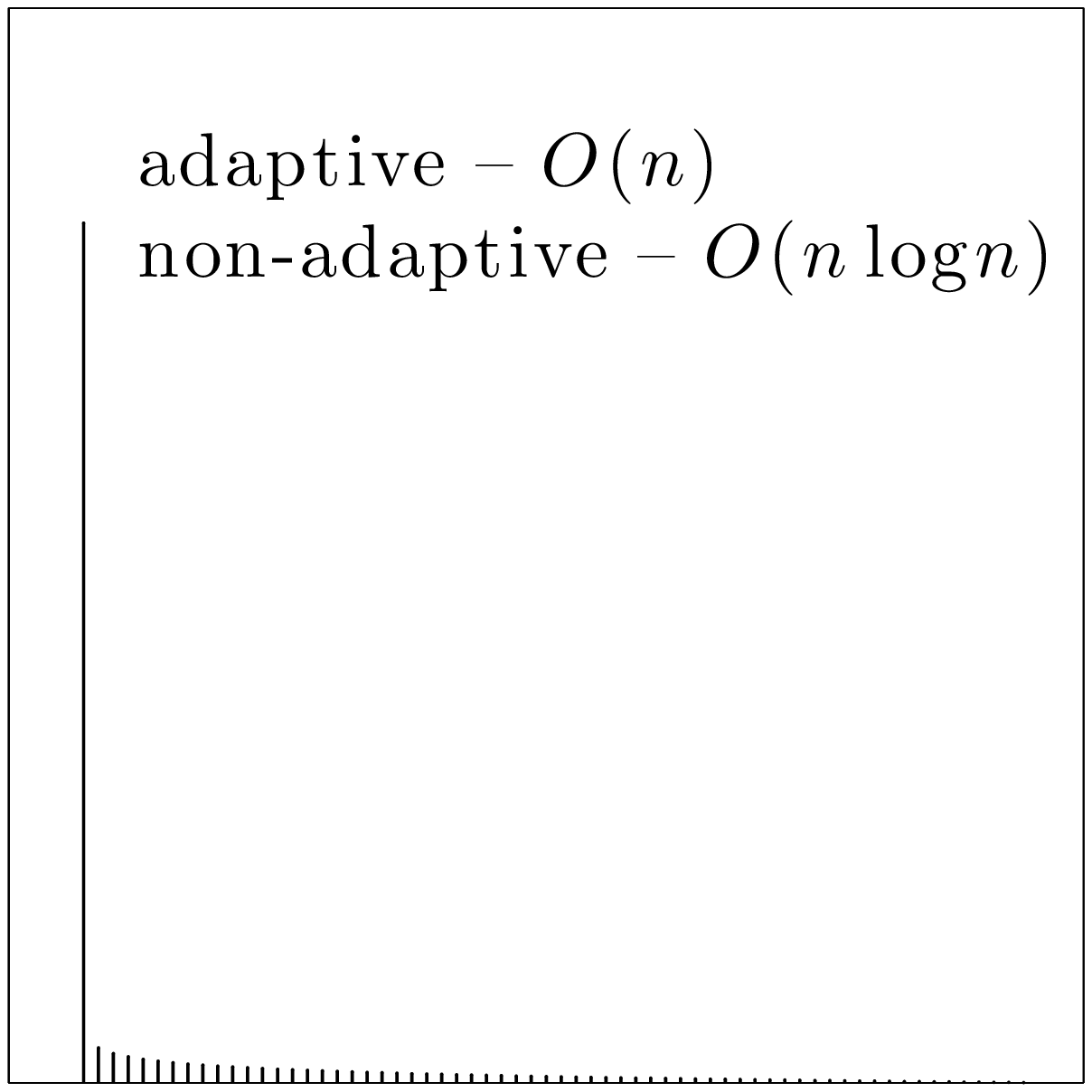} \hspace{-.7cm}  & \hspace{-.7cm}
\includegraphics[width=6.6cm]{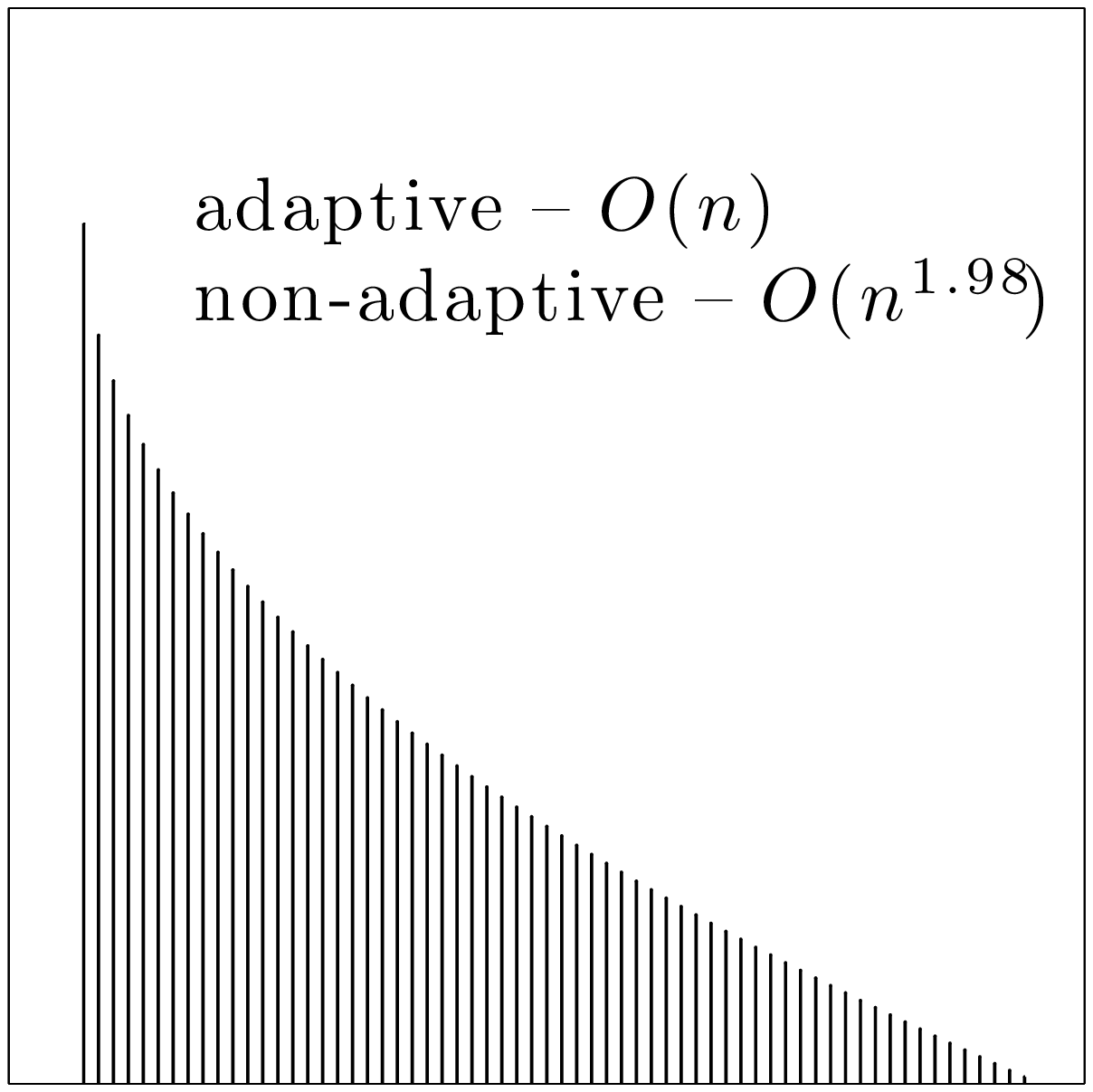} \hspace{-.7cm} & \hspace{-.7cm}
\includegraphics[width=6.6cm]{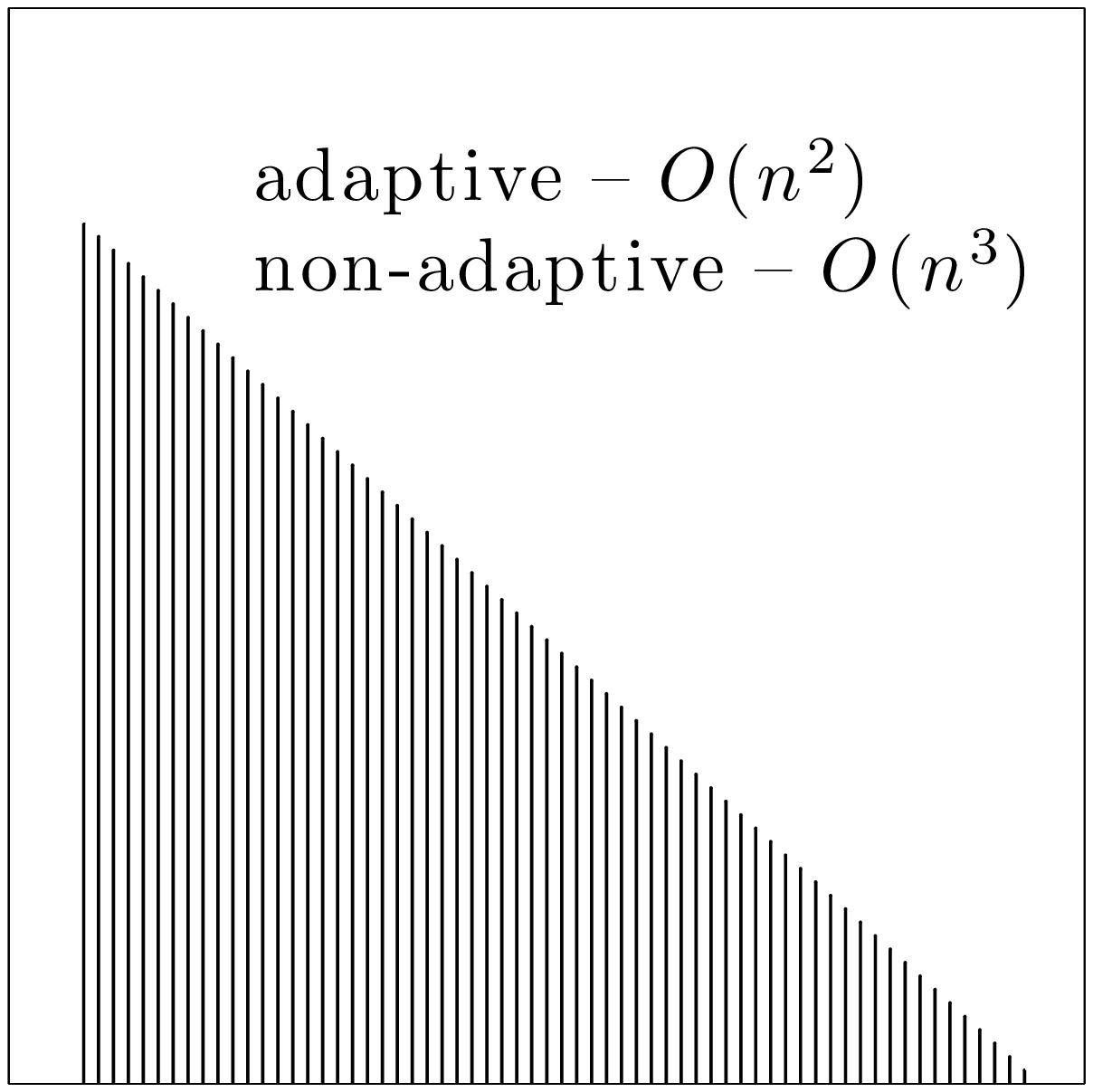}  \vspace{-.5cm} \\
(a) \hspace{-.7cm} & \hspace{-.7cm}(b)\hspace{-.7cm} &\hspace{-.7cm} (c)
\end{tabular}}
 \caption{Three different configurations of means, each ordered $\mu_0>\mu_1>\cdots >\mu_n$.
In (a) $\mu_0-\mu_i \geq 0.95$, in (b) $\mu_0-\mu_i = (\frac{i}{n})^{.49}$, and in (c)  $\mu_0-\mu_i = \frac{i}{n}$, for $i=1,\dots,n$. The necessary sample complexities (sufficient to within $\log \log n$ factors) of non-adaptive and adaptive strategies are indicated in each case.
Finding the best arm becomes increasingly difficult as the gaps between the means decrease, but in all cases adaptive strategies have significantly lower sample complexities.}
    \label{fig1}
\end{figure}

\subsection{Contributions and Organization}

The paper is organized as follows.  In Sec.~\ref{sec:alpha}, we present a single-parameter model for the distribution of means that spans the range from linear to superlinear sample complexity.  
%, in terms of the lower bound $\Hn \log(1/\delta)$.  
%The lower bound exhibits a distinct phase transition from the linear to superlinear sample complexity regimes.
In Sec.~\ref{sec:BAA} we present an algorithm for best arm identification with sample complexity $O(\Hn\log(1/\delta))$ for a wide range of mean distributions. In particular, we show that the algorithm has linear sample complexity for all of the single-parameter distributions satisfying $\Hn = O(n)$.  Our bounds apply to the PAC (probably approximately correct) setting (generalizing the algorithm to the fixed budget case of \cite{ABM10, BMS09}  is a challenging open problem).   While completing this paper\footnote{The main results of this paper were presented in a lecture (but not as a publication) at the Information Theory and Applications (ITA) Workshop in San Diego in February 2013.}, we became aware of independent work on the best arm problem to appear at ICML 2013 \cite{icml}.  The algorithm and theoretical analysis in that paper are essentially the same as ours, but in fact the upper bound on sample complexity bound given in \cite{icml} is slightly tighter.  

Sec.~\ref{lowerbounds} discusses the limitations of non-adaptive sampling strategies and shows that any non-adaptive sampling strategy may require drastically more samples than our adaptive algorithm.  Using new lower bounds for non-adaptive sampling procedures, we show that there exist problems in which the difference between the sample complexities of non-adaptive and adaptive procedures grows \emph{polynomially} with the number of arms.  This is somewhat surprising, as the as the advantage of adaptivity in the sparse setting (where the gaps are bound by a fixed constant) is known to be a factor of $\log n$ at best.
% much greater that the $\log n$ difference arising in the sparse case . 
 To be more concrete, consider the following.  We demonstrate problem instances where adaptive procedures, for example, succeed with just $O(n)$ samples, but all non-adaptive procedures fail without at least $O(n^{1.98})$ samples.  
 %Adaptive procedures succeed with a nearly linear number of samples, where any non-adaptive procedure fails with a quadratic number of samples. 
% implying non-adaptive procedures require a factor of $n$ more samples. 
 This observation is crucial since it shows that adaptive designs can be vastly superior to simpler non-adaptive methods often used in practice (e.g., biological applications mentioned above). The take-away message is that the added implementational burden of adaptive sampling methods may be well worth the investment.

 Notation, in general, follows convention. Since the order and means are unknown, we denote the index of the best arm as $i^*$ throughout the paper.  An estimate of the best arm is denoted as $\widehat{i}$.  Proofs of all Theorems are found in the Appendix.

\section{A Single-Parameter Family of Mean Distributions} \label{sec:alpha}

%This paper identifies sufficient conditions under which the sample complexity of the best arm problem is linear in the number of arms.  

A lower bound on the sample complexity of finding the best arm follows in a straightforward way from \cite[Theorem 5]{mannor2004sample}; see Theorem \ref{thm:AdpLB1} in Sec.~\ref{lowerbounds} for the derivation.
To find the best arm with probability at least $1-\delta$ requires at least
\begin{eqnarray} \nonumber
c_1\Hn \log(1/\delta)
\end{eqnarray}
samples, where $c_1>0$ is a universal constant. The quantity $\Hn$, refereed to as the hardness of the problem, is given by
\begin{eqnarray} \label{eqn:hardness}
\Hn = \sum_{i=1}^n \Delta_i^{-2}.
\end{eqnarray}
where
\begin{eqnarray}
\Delta_i = \mu_0 - \mu_i
\end{eqnarray}
is the gap between the best arm and the $i$th arm.   

%A pull or sample of an arm is an independent realization of a sub-Gaussian random variable $X_i$ with mean $\mu_i$. 
%In the lower bounds that follow we implicitly assume the probability the procedure returns the incorrect arm,  $ \P(\widehat{i} \neq i^*)$, is the average probability of error over uniformly random permutations of the arms.  

In this paper we focus on a specific parametric family that spans the hardness of the best arm problem with a single parameter.  
%We are especially  interested in identifying situations that  are no more difficult than when $\Delta_1$ is constant. To study this, 
%We consider the following model for the means, parametrized by $\alpha>0$.
Consider a model in which the means are given by
\begin{eqnarray}
\mu_i = \mu_0-(i/n)^{\alpha}
\label{alphaparam}
\end{eqnarray}
for $i=1,\dots,n$ and some $\alpha \geq 0$. We refer to this model as an \emph{$\alpha$-parameterization}.   Under this model, $\Delta_i = (i/n)^{\alpha}$.   The $\alpha$-parameterization spans the range from ``hard'' problems in which the gaps $\Delta_i$ are shrinking quickly as $n$ grows, to ``easy'' or sparse cases in which the gaps are greater than a constant (when $\alpha=0$).  Theorem \ref{thm:AdpLB1}  in Sec.~\ref{lowerbounds} yields the following lower bounds on the sample complexity of identifying the best arm (ignoring constant and $\log(1/\delta)$ factors):
\begin{eqnarray} \label{eqn:lbalph}
\mbox{sample complexity} \ \geq \  \left\{
\begin{array}{lcl}  n &  & \mbox{if $\alpha < 1/2$}  \\
n\log n  & & \mbox{if $\alpha=1/2$}  \\
n^{2\alpha} & & \mbox{if $\alpha>1/2$}.  \\
\end{array}
\right.
\label{customH2}
\end{eqnarray}
% We note that whenever the gaps satisfy $\Delta_i \leq C(\frac{i}{n})^\alpha$, for some $\alpha \geq 0$, the parameterization above applies, as the problem can be rescaled by a constant factor.

The focus of this paper is on finding sufficient conditions under which the best arm can be found with $O(n)$ samples.  If the gaps follow the $\alpha$-parameterization with $\alpha \geq 1/2$, (\ref{eqn:lbalph}) implies that the best arm \emph{cannot} be found with $O(n)$ samples.  Conversely, if the gaps satisfy $\Delta_i \geq C(\frac{i}{n})^\alpha$, for $\alpha < 1/2$, the lower bound in (\ref{eqn:lbalph}) does not preclude the possibility that order $n$ samples are sufficient to find the best arm.  In the next section, we show that when $\alpha < 1/2$, order $n$ samples are indeed sufficient.

\section{PRISM Algorithm for Best Arm Identification}
\label{sec:BAA}

To show that a linear number of pulls is sufficient for a number of problem instances, we propose and analyze the algorithm for best arm identification outlined in Fig.~\ref{fig:AME2}.  The algorithm follow a multi-phase approach, with a specific allocation of confidence and sampling budgets across phases.  The algorithm relies on the output of Median Elimination \cite{even2006action} to establish a threshold on each phase.   We mention again independent work to appear at ICML 2013 \cite{icml}, which proposes and analyzes essentially the same algorithm.

\begin{figure}[h]
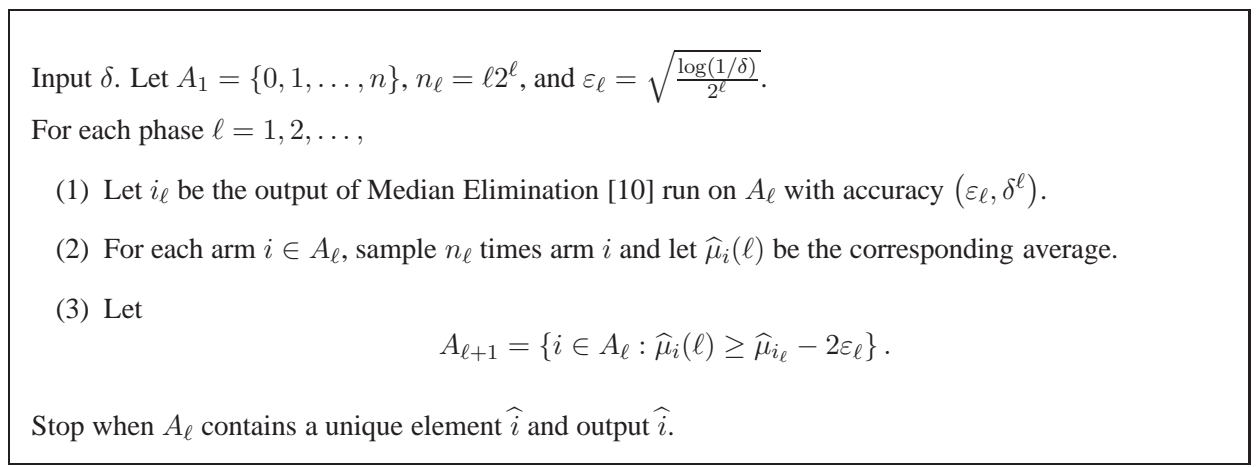

\bookbox{
%Parameters: the number of rounds $n$.

\medskip\noindent
Input $\delta$.
Let $A_1=\{0,1,\hdots,n\}$, $\nl = \ell 2^{\ell }$, and $\el = \sqrt{\frac{ \log (1/\delta)}{2^{\ell }}}$.

\medskip\noindent
For each phase $\ell=1,2,\ldots,$
\begin{itemize}

\item[(1)]
Let $\il$ be the output of Median Elimination \cite{even2006action}  run on $\Al$ with accuracy $\left(\el,\delta^\ell\right)$.
\item[(2)]
For each arm $i \in \Al$, sample $\nl$ times arm $i$ and let $\hatmuli$ be the corresponding average.
\item[(3)]
Let 
$$A_{\ell+1} = \left\{i \in \Al : \hatmuli \geq \widehat{\mu}_{i_\ell}  -  2 \el \right\} .$$
\end{itemize}

\medskip\noindent
Stop when $\Al$ contains a unique element $\widehat{i}$ and output $\widehat{i}$.
}
\caption{\label{fig:AME2}
PRISM algorithm for the best arm identification problem.}
\end{figure}

The following theorem is our main result.  The sample complexity of identifying the best arm with probability at least $1-\delta$ is bounded in terms of $\Hn$ and a novel measure of complexity denoted by $\mathbf{G}:= \sum_{i=1}^n \Delta_{i}^{-2} \log_2( \Delta_{i}^{-2})$.  In general $\Hn \leq \mathbf{G} \leq \Hn \log(\Hn)$ but in many cases, ${\mathbf{G}} = \Hn$ and the bound implies that the best arm can be found using $O(\Hn)$ samples with a fixed probability of error.  This is the best known bound for the best arm problem. The proof is left to the appendix.

\begin{thm} \label{PRISM_alg}
Let $\delta \in (0,1)$.  Let $\mathbf{H} = \sum_{i=1}^n \Delta_{i}^{-2}$ where $\Delta_1$ is the minimum gap. Then with probability at least $  1 - \frac{3\delta^2}{1-\delta^2} - \frac{ \delta}{1-\delta}-\frac{4\delta^2}{(1-\delta^2)^2}$, the PRISM algorithm of Fig.~\ref{fig:AME2} stops after at most  
\begin{align*}
O\left( \log(1/\delta)  \left[ \mathbf{H}\log(\log(1/\delta)) +  \sum_{i=1}^n \Delta_{i}^{-2} \log_2( \Delta_{i}^{-2}) \right] \right)
\end{align*}
samples and outputs arm $\widehat{i} = i^*$.   
\end{thm}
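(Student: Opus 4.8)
The plan is to run the whole argument on a single high-probability ``clean'' event and then read off both correctness and the sample count deterministically. The parameters are calibrated so that $\nl\el^2 = \ell\log(1/\delta)$, hence Hoeffding's inequality gives $\P(|\hatmuli-\mu_i|>\el)\le 2\delta^{2\ell}$ for each fixed arm $i$ and phase $\ell$, while the samples drawn in Step~(2) total $|\Al|\nl = |\Al|\ell2^\ell$. Median Elimination run with accuracy $(\el,\delta^\ell)$ returns an $\el$-optimal arm of $\Al$ with probability $1-\delta^\ell$ and spends $O(|\Al|\el^{-2}\log(1/\delta^\ell)) = O(|\Al|\ell2^\ell)$ samples, i.e.\ the same order as Step~(2); so the per-phase budget is $O(|\Al|\ell2^\ell)$ and I only need to count $\sum_\ell|\Al|\ell2^\ell$. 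Summing the Median-Elimination failure probabilities over phases yields $\sum_{\ell\ge1}\delta^\ell = \frac{\delta}{1-\delta}$, and tracking the two concentration events that matter for retention (those of arm $i^*$ and of $\il$) over phases yields $\sum_{\ell\ge1}3\delta^{2\ell}=\frac{3\delta^2}{1-\delta^2}$.

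Writing $i^*=0$ for the best arm, I would first show $i^*$ is never discarded: on the clean event $\widehat{\mu}_0(\ell)\ge\mu_0-\el\ge\mu_{\il}-\el\ge\hatmulil-2\el$, so arm $i^*$ always passes the test in Step~(3). Since the algorithm halts only when $|\Al|=1$ and $i^*\in\Al$ for every $\ell$, any output must equal $i^*$; thus correctness needs nothing beyond the retention and Median-Elimination events above, and in particular is free of any dependence on $n$. For the elimination direction, combine the Median-Elimination guarantee $\mu_{\il}\ge\mu_0-\el$ with concentration to obtain $\hatmulil-2\el\ge\mu_0-4\el$ and $\hatmuli\le\mu_i+\el$; hence every arm with $\Delta_i>5\el$ fails the survival test at phase $\ell$. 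Consequently, on the clean event $i\in\Al$ forces $\Delta_i\le5\epsilon_{\ell-1}$, i.e.\ $2^{\ell-1}\le 25\log(1/\delta)\Delta_i^{-2}$, so arm $i$ is active only through phase $L_i = O\!\big(\log_2\Delta_i^{-2}+\log_2\log(1/\delta)\big)$.

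With the active window of each arm pinned down, the sample count is handled by exchanging the order of summation and charging $O(\ell2^\ell)$ to each arm $i$ in each phase $\ell\le L_i$ it remains alive: $\sum_\ell|\Al|\ell2^\ell = \sum_i\sum_{\ell\le L_i}O(\ell2^\ell) = \sum_i O(L_i2^{L_i})$, using that a geometric-times-linear partial sum is dominated by its final term. Substituting $2^{L_i}=O(\log(1/\delta)\Delta_i^{-2})$ and the value of $L_i$ gives $O\!\big(\log(1/\delta)[\sum_i\Delta_i^{-2}\log_2\Delta_i^{-2} + \log_2\log(1/\delta)\sum_i\Delta_i^{-2}]\big) = O\!\big(\log(1/\delta)[\mathbf{G}+\Hn\log\log(1/\delta)]\big)$, which is exactly the claimed bound.

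The delicate part is keeping the failure probability independent of $n$. A naive demand that \emph{every} active arm concentrate in \emph{every} phase would cost $\sum_\ell|\Al|\delta^{2\ell}$, which is $\Theta(n\delta^2)$ since $|A_1|=n+1$; this is why I would decouple the two claims. Correctness rides only on the $n$-free retention and Median-Elimination events. For the sample bound I would control, separately for each suboptimal arm, the event that it survives past the phase $\ell_i$ at which its gap first becomes resolvable: surviving $j$ extra phases requires $j$ \emph{independent} upward concentration failures in consecutive phases, so summing the per-phase failure contributions produces the doubly-geometric series $\sum_{\ell\ge1}\ell\,\delta^{2\ell}=\frac{\delta^2}{(1-\delta^2)^2}$ behind the final $\frac{4\delta^2}{(1-\delta^2)^2}$ term, and on the complementary event each arm's extra samples are geometrically dominated by its ideal cost $O(L_i2^{L_i})$. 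Verifying that this per-arm control is enough to guarantee the deterministic implication $i\in\Al\Rightarrow\Delta_i\le5\epsilon_{\ell-1}$ used above, while matching the stated constants, is the main obstacle and the step I would spend the most care on.
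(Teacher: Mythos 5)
Your Step-1 (correctness) argument coincides with the paper's: the one-sided concentration of $\widehat{\mu}_{i^*}(\ell)$, the two-sided concentration of $\widehat{\mu}_{i_\ell}(\ell)$, and the Median Elimination guarantee give exactly the chain $\widehat{\mu}_{i^*}(\ell) \geq \mu^* - \el \geq \mu_{i_\ell} - \el \geq \widehat{\mu}_{i_\ell}(\ell) - 2\el$, and these events are $n$-free. The gap is in your Step-2/Step-3 mechanism, and it sits precisely at the point you flagged as ``the main obstacle'': the deterministic implication $i \in \Al \Rightarrow \Delta_i \leq 5\epsilon_{\ell-1}$ is simply not available with $n$-independent probability, and neither is your proposed per-arm substitute. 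Concretely, suppose all $n$ suboptimal arms have constant gaps, so every $L_i = O(\log_2\log(1/\delta))$. For your implication to hold already at the first resolvable phase, each of these $\Theta(n)$ arms must avoid an upward deviation of size $\el$, each failure having probability up to $\delta^{2\ell}$; any union bound over these per-arm events costs $\Theta(n\delta^{c})$ for a $\delta$-dependent constant $c$, which exceeds the claimed $n$-free failure probability once $n$ is large. Your refinement --- controlling, per arm, the event of surviving past $L_i$ --- has the same defect: it is still a union over $n$ per-arm events, each of probability roughly $\delta^{2L_i}$, and $\sum_{i=1}^n \delta^{2L_i}$ grows linearly in $n$ in the constant-gap example. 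The series $\sum_{\ell \geq 1}\ell\,\delta^{2\ell}$ you wrote down reproduces the paper's numerical term but does not correspond to any valid union bound over arms in your scheme.

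The missing idea in the paper is to never control arms individually. Arms are grouped into gap slices $\Omega_s = \{i : 5\sqrt{2}\,\epsilon_{s+1} < \Delta_i \leq 5\sqrt{2}\,\epsilon_s\}$, and for each slice and each phase $\ell \geq s$ one bounds, via Markov's inequality applied to the \emph{number} of surviving arms, the probability that more than a quarter of $\Al \cap \Omega_s$ passes the threshold test; crucially this bound, $4\delta^{2\ell}$, is independent of $|\Al \cap \Omega_s|$, which is what kills the $n$-dependence. Summing over $s \leq \ell$ and $\ell \geq 1$ gives exactly the $\frac{4\delta^2}{(1-\delta^2)^2}$ term. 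The price is that arms are \emph{not} guaranteed dead by phase $L_i$; instead one gets, by induction, the geometric decay $|\Al \cap \Omega_s| \leq |\Omega_s|/4^{\ell-s}$, hence $|\Al| \leq 1 + \sum_{s\leq\ell}|\Omega_s|/4^{\ell-s} + \sum_{s>\ell}|\Omega_s|$, and the pull count $\sum_\ell c\,\ell\, 2^\ell |\Al|$ is bounded by a convolution estimate that collapses to $O(\sum_s s\,2^s|\Omega_s|)$ --- after which your final arithmetic translating this into $\log(1/\delta)\left[\Hn\log\log(1/\delta) + \sum_i \Delta_i^{-2}\log_2(\Delta_i^{-2})\right]$ is essentially the same as the paper's. (A salvageable variant of your idea would apply Markov's inequality once to the \emph{total} number of extra samples aggregated over all arms, rather than per arm; but that is a different argument from the one you wrote, and you would need to redo the failure-probability accounting.)
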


\begin{corollary} \label{alphaCor}
Consider the problem instance $\mu_0=1$ and $\mu_i = 1-(i/n)^{\alpha}$, for $i=1,\dots,n$ and some $0<\alpha$. Then for a fixed probability error, using the PRISM algorithm of Fig.~\ref{fig:AME2} we have that
\begin{eqnarray} 
\mbox{number of total samples} \ = \  \left\{
\begin{array}{lcl}  O(n) &  & \mbox{if $\alpha < 1/2$}  \\
O( n\log^2 n)  & & \mbox{if $\alpha=1/2$}  \\
O(n^{2\alpha}\log(n)) & & \mbox{if $\alpha>1/2$}.  \\
\end{array}
\right.
\label{customH2}
\end{eqnarray}
 and the algorithm outputs arm $\widehat{i} = i^*$. 
\end{corollary}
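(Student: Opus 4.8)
The plan is to substitute the $\alpha$-parameterization $\Delta_i=(i/n)^{\alpha}$ directly into the bound of Theorem~\ref{PRISM_alg} and estimate the two resulting sums. First I would fix the probability of error, so that $\delta$ is a constant and both $\log(1/\delta)$ and $\log(\log(1/\delta))$ are absorbed into the implied constant. The bound of Theorem~\ref{PRISM_alg} then collapses to $O(\mathbf{H}+\mathbf{G})$, where $\mathbf{H}=\sum_{i=1}^n\Delta_i^{-2}$ and $\mathbf{G}=\sum_{i=1}^n\Delta_i^{-2}\log_2(\Delta_i^{-2})$. Since $\Delta_i=(i/n)^{\alpha}\le 1$, the weight $\log_2(\Delta_i^{-2})$ is nonnegative, and the only terms where it dips below $1$ are those with $\Delta_i$ near $1$ (i.e.\ $i$ near $n$); each such term contributes at most a constant, so across at most $n$ of them we get $\mathbf{H}\le \mathbf{G}+O(n)$. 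Combined with $\mathbf{G}=\Omega(n)$ in every regime below, it therefore suffices to estimate $\mathbf{G}$ and add the harmless $O(n)$.

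Next I would write the summands out. With $\Delta_i^{-2}=(n/i)^{2\alpha}$ and $\log_2(\Delta_i^{-2})=2\alpha\log_2(n/i)$ one obtains
\[
\mathbf{H}=n^{2\alpha}\sum_{i=1}^n i^{-2\alpha},\qquad
\mathbf{G}=\frac{2\alpha}{\log 2}\,n^{2\alpha}\sum_{i=1}^n i^{-2\alpha}\log(n/i).
\]
Both inner sums are power sums I would bound by comparison with $\int_1^n x^{-2\alpha}\,dx$ and $\int_1^n x^{-2\alpha}\log(n/x)\,dx$; after the substitution $x=nt$ these become $n^{1-2\alpha}\int_{1/n}^1 t^{-2\alpha}\,dt$ and $n^{1-2\alpha}\int_{1/n}^1 t^{-2\alpha}\log(1/t)\,dt$. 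For $\alpha<1/2$ both integrals converge as $n\to\infty$ (the logarithmic weight is harmless against an integrable power), giving $\mathbf{H},\mathbf{G}=O(n)$. For $\alpha=1/2$ the first integral yields the harmonic $\log n$, while the second, via $u=\log(n/x)$ reducing to $\int_0^{\log n}u\,du$, yields $\tfrac12(\log n)^2$; hence $\mathbf{H}=O(n\log n)$ and the dominant $\mathbf{G}=O(n\log^2 n)$. For $\alpha>1/2$ the power sum converges to a constant while the log-weighted sum is controlled by its $i=1$ term of size $\log n$, so $\mathbf{H}=O(n^{2\alpha})$ and $\mathbf{G}=O(n^{2\alpha}\log n)$.

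Assembling the three cases and using $\mathbf{H}\le\mathbf{G}+O(n)$ with $\mathbf{G}=\Omega(n)$ produces the claimed totals $O(n)$, $O(n\log^2 n)$, and $O(n^{2\alpha}\log n)$; correctness of the output $\widehat{i}=i^*$ is inherited verbatim from Theorem~\ref{PRISM_alg}. The one genuinely delicate step is the second integral in the borderline regime $\alpha=1/2$: the point is to notice that the extra $\log_2(\Delta_i^{-2})$ factor present in $\mathbf{G}$ but not in $\mathbf{H}$ promotes the harmonic behavior from $\log n$ to $\log^2 n$, which is precisely what separates the $\mathbf{G}$-bound from the $\mathbf{H}$-bound and explains the $\log^2 n$ in the statement. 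I would also verify that the sum-to-integral comparison remains valid despite the singularity of $x^{-2\alpha}$ at the origin and the vanishing of $\log(n/x)$ at $x=n$, treating the $i=n$ endpoint separately where needed.
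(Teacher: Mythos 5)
Your proposal is correct and follows essentially the same route as the paper: substitute $\Delta_i=(i/n)^{\alpha}$ into the bound of Theorem~\ref{PRISM_alg} with $\delta$ fixed, and estimate $\mathbf{H}$ and $\mathbf{G}$ in the three regimes (the paper simply tabulates the resulting asymptotics, e.g.\ $\frac{2\alpha}{1-2\alpha}n$, $2\alpha n\log^2 n$, and $\frac{(2\alpha)^2}{2\alpha-1}n^{2\alpha}\log n$, which your integral comparisons reproduce). Your treatment of the borderline case $\alpha=1/2$, where the extra $\log_2(\Delta_i^{-2})$ weight promotes $\log n$ to $\log^2 n$, matches the paper's table entry exactly.
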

\begin{proof}
For $\Delta_i = \left(\frac{i}{n}\right)^\alpha$ we have the relevant quantities given in the following table (ignoring lower order terms):
%\begin{align*}
%\begin{tabular}{|M|M|M|M|M|}
%\hline
% & \mathbf{H} &  \sum_{i=1}^n \Delta_{i}^{-2} \log_2( \Delta_{i}^{-2}) & \max\left\{ \Delta_1^{-2}, \mathbf{H} \Delta_1^2 \right\} \\
% \hline
% \alpha > 1/2 & \frac{2\alpha}{2\alpha-1} n^{2\alpha} &  \frac{(2\alpha)^2}{2\alpha-1} n^{2\alpha}\log(n) & n^{2\alpha} \\
% \hline
% \alpha = 1/2 & n \log(n) & 2\alpha n \log^2 (n) & n \\
% \hline
% \alpha < 1/2 & \frac{2\alpha}{1-2\alpha} n  & \frac{2\alpha}{(1-2\alpha)^2} n&  \max\{ \frac{1}{1-2\alpha} n^{1-2\alpha},n^{2\alpha}\} \\
% \hline
%\end{tabular}
%\end{align*}
\begin{align*}
\begin{tabular}{|M|M|M|M|M|}
\hline
 & \mathbf{H} &  \sum_{i=1}^n \Delta_{i}^{-2} \log_2( \Delta_{i}^{-2})  \\
 \hline
 \alpha > 1/2 & \frac{2\alpha}{2\alpha-1} n^{2\alpha} &  \frac{(2\alpha)^2}{2\alpha-1} n^{2\alpha}\log(n)  \\
 \hline
 \alpha = 1/2 & n \log(n) & 2\alpha n \log^2 (n) \\
 \hline
 \alpha < 1/2 & \frac{2\alpha}{1-2\alpha} n  & \frac{2\alpha}{(1-2\alpha)^2} n  \\
 \hline
\end{tabular}
\end{align*}
The result follows from plugging the above quantities into Theorem~\ref{PRISM_alg}. 
\end{proof}

\noindent {\bf \emph{Conservative PRISM}.}
Consider the algorithm of Fig. \ref{fig:AME2}, but in the first line set $n_{\ell} = 2^{\ell}$ and $\epsilon_{\ell} =   \sqrt{\log(\ell^2/\delta) /2^{\ell}}$ and for item (2), run Median Elimination with input $(\epsilon_\ell, \delta/\ell^2)$.

\begin{thm} \label{th:AME}
Let $\delta \in (0,0.6]$. With probability at least $1-2 \delta - 6 \delta^2 - 6 \delta^4$, Conservative PRISM stops after at most $O\left( \Hn  \log\left( \frac{ \log ( \Hn)}{\delta} \right) \right)$ pulls and outputs $\widehat{i} = i^*$.
\end{thm}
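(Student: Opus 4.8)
The plan is to run the usual good-event template: fix a high-probability event on which the algorithm is forced to behave correctly and on which the bulk of the pulls can be accounted for deterministically, and then control separately the little bit of randomness that the event leaves open. For each suboptimal arm $i$ I would introduce a target phase $m_i$, the first phase at which $\epsilon_\ell<\Delta_i/c$ for a fixed constant $c\ge 4+\sqrt2$; since $\epsilon_\ell^2=\log(\ell^2/\delta)/2^\ell$ this gives $2^{m_i}=\Theta(\Delta_i^{-2}\log(m_i^2/\delta))$, the sample size at which arm $i$ becomes distinguishable from the top. The good event $\mathcal G$ I would use is: (i) Median Elimination succeeds in every phase, i.e. $\mu_{i_\ell}\ge\mu_0-\epsilon_\ell$; and (ii) in every phase the empirical means of arm $0$ and of $i_\ell$ are accurate, $\hat\mu_0(\ell)\ge\mu_0-\epsilon_\ell$ and $|\hat\mu_{i_\ell}(\ell)-\mu_{i_\ell}|\le\epsilon_\ell$. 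The key point is that these are statements about a constant number of arms per phase, so no union bound over the $n$ arms enters here.

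First I would bound $P(\mathcal G^c)$. The Median-Elimination calls fail with probability at most $\sum_{\ell\ge1}\delta/\ell^2=\delta\pi^2/6<2\delta$, which supplies the $2\delta$ term. Each relevant one-sided deviation in phase $\ell$ fails with probability at most $\exp(-2\,n_\ell\epsilon_\ell^2)=(\delta/\ell^2)^2$ by Hoeffding (the index $i_\ell$ is produced from Median Elimination's samples, independent of the phase-$\ell$ averages, so conditioning on $i_\ell$ leaves $\hat\mu_{i_\ell}(\ell)$ a clean empirical mean); summing the three deviations over phases gives at most $3\delta^2\pi^4/90\le 6\delta^2$.

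Next, on $\mathcal G$ I would establish correctness. Arm $0$ is never discarded, since (ii) gives $\hat\mu_{i_\ell}(\ell)-2\epsilon_\ell\le\mu_{i_\ell}-\epsilon_\ell\le\mu_0-\epsilon_\ell\le\hat\mu_0(\ell)$, so $0\in A_{\ell+1}$ in every phase. Combining (i) and (ii) also yields the uniform threshold lower bound $T_\ell:=\hat\mu_{i_\ell}(\ell)-2\epsilon_\ell\ge\mu_0-4\epsilon_\ell$; since $n_\ell\to\infty$, each suboptimal average converges to $\mu_i<\mu_0$ almost surely, so every arm $i\ge1$ is eventually eliminated and the algorithm terminates with $\widehat i=0$ — again with no arm-wise union bound needed. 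For the pull count I would split the pulls of arm $i$ into on-schedule phases $\ell\le m_i$ and overrun phases $\ell>m_i$, noting that the Median-Elimination budget in phase $\ell$ is itself $O(|A_\ell|2^\ell)$. Using $2^{m_i}=\Theta(\Delta_i^{-2}\log(m_i^2/\delta))$ together with $m_i=O(\log\mathbf H)$ (whence $\log(m_i^2/\delta)=O(\log(\log(\mathbf H)/\delta))$, since $\Delta_i^{-2}\le\mathbf H$), the on-schedule pulls sum to $O\!\big(\sum_i\Delta_i^{-2}\log(\log(\mathbf H)/\delta)\big)=O(\mathbf H\log(\log(\mathbf H)/\delta))$ deterministically on $\mathcal G$.

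The hard part, and the only place a naive argument would cost a spurious factor of $n$, is the overrun. For $\ell>m_i$ arm $i$ survives phase $\ell-1$ only if $\hat\mu_i(\ell-1)\ge T_{\ell-1}\ge\mu_0-4\epsilon_{\ell-1}$, i.e. only if its average exceeds $\mu_i$ by at least $\Delta_i-4\epsilon_{\ell-1}\ge(1-4/c)\Delta_i$; by Hoeffding this has probability at most $\exp(-2^\ell(1-4/c)^2\Delta_i^2)$, which decays doubly-exponentially in $\ell$ because the sample size $2^\ell$ grows geometrically. Hence the expected overrun pulls of arm $i$, $\sum_{\ell>m_i}2^\ell\,\Pr(i\in A_\ell\mid\mathcal G)$, are dominated by their first term and are at most $O\!\big(2^{m_i}(\delta/m_i^2)^{2(c-4)^2}\big)\le O(\delta^4\,2^{m_i})$ once $c\ge4+\sqrt2$. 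Summing over $i$ gives $\E[\text{overrun}\mid\mathcal G]\le O(\delta^4)\sum_i 2^{m_i}=O(\delta^4)\cdot O(\mathbf H\log(\log(\mathbf H)/\delta))$, so by Markov's inequality the overrun exceeds the on-schedule bound with probability at most $6\delta^4$; here the sum $\sum_i 2^{m_i}=\Theta(\mathbf H\log(\cdot))$ cancels between the expected overrun and the threshold, which is exactly what removes the factor of $n$ that a per-arm union bound would have forced. Collecting the three failure probabilities yields the stated $1-2\delta-6\delta^2-6\delta^4$ for both correctness and the pull bound.
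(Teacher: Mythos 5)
A preliminary remark: the paper never actually proves Theorem~\ref{th:AME}. Despite the claim that all proofs appear in the appendix, only Theorems~\ref{PRISM_alg}, \ref{thm:AdpLB1}, \ref{thm:NALB1} and the $\alpha$-parameterization lower bound are proved there, so your proposal can only be compared against the proof of the closely related Theorem~\ref{PRISM_alg}. Measured against that, your argument is correct but takes a genuinely different route. The paper controls the survivor sets by slicing arms into classes $\Omega_s$ of comparable gap and building into the good event the per-slice condition \eqref{eq:mal4} that each slice loses at least three quarters of its survivors in every phase; on that event the bound on $|A_\ell|$, the stopping phase $\cL$, and the pull count are essentially deterministic. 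You instead keep only the cheap per-phase conditions (Median Elimination success, accuracy of $\widehat{\mu}_{i^*}(\ell)$ and $\widehat{\mu}_{i_\ell}(\ell)$), charge each suboptimal arm deterministically for phases $\ell\le m_i$, and control the overrun phases $\ell>m_i$ in expectation, converting this into a probability statement by one Markov inequality on the aggregate pull count. Both devices avoid the fatal union bound over all $n$ arms, but your accounting --- $\delta\pi^2/6\le 2\delta$ for ME, $3\zeta(4)\delta^2\le 6\delta^2$ for the deviations, $O(\delta^4)$ from Markov --- reproduces exactly the stated $1-2\delta-6\delta^2-6\delta^4$, which a direct transcription of the Theorem~\ref{PRISM_alg} proof with the conservative parameters would not (its condition \eqref{eq:mal4} contributes an extra term of order $4\zeta(3)\delta^2$); the price is that your pull bound holds only with probability $1-O(\delta^4)$ beyond the good event rather than deterministically on it.

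Three small repairs you should make. First, run the overrun estimate through $\P\left(\{i\in A_\ell\}\cap\mathcal{G}\right)\le \P\left(\widehat{\mu}_i(\ell-1)\ge \mu_0-4\epsilon_{\ell-1}\right)$ and apply Hoeffding unconditionally: conditioning on $\mathcal{G}$ as written formally costs a factor $1/\P(\mathcal{G})$, which is not benign over the whole range $\delta\in(0,0.6]$. Second, the best arm has no target phase, so charge its pulls (and the final singleton phase) to a surviving suboptimal arm, which exists whenever a phase is executed since then $|A_\ell|\ge 2$; this also makes your almost-sure-convergence remark unnecessary, because finiteness of the overrun already forces termination. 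Third, $m_i=O(\log \Hn)$ should read $m_i=O(\log\Hn+\log\log(1/\delta))$, which still gives $\log(m_i^2/\delta)=O\left(\log\left(\log(\Hn)/\delta\right)\right)$ as you use it.
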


Theorem \ref{PRISM_alg} matches the lower bound when $\alpha < 1/2$ and comes within a $\log(n)$ factor for $\alpha \geq 1/2$. On the other hand, Theorem \ref{th:AME}  comes within a factor of $\log \log n$ of the lower bound for all $\alpha > 0$. We note that the upper bound  of  \cite{icml} also comes no closer than  $\log\log n$ of the lower bound for $\alpha \geq 1/2$.

\section{Lower Bounds on the Sample Complexity of Non-Adaptive Algorithms}
\label{lowerbounds}
Here we examine the limitations of non-adaptive sampling strategies (which sample all arms an equal number of times), since these simpler procedures are not uncommon in applications like the biological problems that partially motivate this paper. A non-adaptive procedure is any procedure that samples each arm $m$ times, where $m$ is fixed a-priori, and outputs a single arm as an estimate of the best arm.   We show that all non-adaptive methods may require drastically more samples than PRISM.  The take-away message is that the small added difficulty (for the practitioner) in applying PRISM may be well worth the investment.

%Throughout this section we adopt the notation such that the means with respect to their original indices are not necessarily ordered, but if there are parentheses around an index then we have  $\mu_{i^*} =\mu_{(0)} >\mu_{(1)} \geq \mu_{(2)} \geq \dots \geq \mu_{(n)}$. Likewise, the ordered set of gaps are given as $\{\Delta_{(1)}, \dots, \Delta_{(n)} \}$.

We begin by formally stating the adaptive lower bound developed in \cite{mannor2004sample}.

\begin{thm} {\bf{Adaptive Lower Bound}} \cite[Theorem 5]{mannor2004sample}. For every set of means, $\{\mu_{i} \}_{i=0}^n$, $\mu_{i} \in (3/8,1/2]$, there exists a joint distribution on the arms such that the arms take mean values $\{\mu_{0},\dots,\mu_{n} \}$, each arm is sub-Gaussian, and any adaptive procedure with fewer than 
\begin{eqnarray}
c_1 \mathbf{H} \log \frac{1}{8\delta}
\end{eqnarray}
samples in expectation has $ \P(\widehat{i} \neq i^*) \geq \delta$ for any $\delta \in (0, e^{-8}/8)$ and some constant $c_1$.
\label{thm:AdpLB1}
\end{thm}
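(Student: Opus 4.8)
The plan is to prove this lower bound by a change-of-measure (hypothesis-testing) argument, bounding from below the expected number of pulls that any $\delta$-correct adaptive procedure must spend on each suboptimal arm. First I would exhibit the promised joint distribution by the simplest valid choice: realize each arm $i$ as an \emph{independent} $\Ber(\mu_i)$ with the prescribed mean $\mu_i \in (3/8,1/2]$. Bernoulli variables are bounded, hence sub-Gaussian, so the hypotheses are met; call this environment $\nu$, in which arm $0$ is the unique best arm. For each competitor $i \in \{1,\dots,n\}$, I build an alternative environment $\nu^{(i)}$ identical to $\nu$ except that arm $i$'s mean is raised to $\mu_i' = \mu_0 + \xi$ for an arbitrarily small $\xi>0$, so that arm $i$ becomes the unique best arm under $\nu^{(i)}$. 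Any procedure that errs with probability at most $\delta$ must output $0$ with probability $\geq 1-\delta$ under $\nu$, and must output $0$ with probability $\leq \delta$ under $\nu^{(i)}$ (since arm $i$ is now optimal there).

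The heart of the argument is the divergence-decomposition identity for sequential sampling. Let $N_i$ denote the random number of pulls of arm $i$ up to the (data-dependent) stopping time, and $T=\sum_i N_i$ the total sample count. Because $\nu$ and $\nu^{(i)}$ differ only in the law of arm $i$, the chain rule for relative entropy along the observation filtration, combined with Wald's identity to absorb the random stopping, gives
$$\KL\!\left(\P_\nu,\P_{\nu^{(i)}}\right) = \E_\nu[N_i]\,\kl(\mu_i,\mu_i').$$
Applying the data-processing inequality to the binary event $\{\widehat{i}=0\}$ and using the error-probability constraints above yields
$$\E_\nu[N_i]\,\kl(\mu_i,\mu_i') \;\geq\; \kl(1-\delta,\delta) \;\geq\; \log\frac{1}{8\delta},$$
the last step being a standard lower bound on the binary relative entropy valid for $\delta$ in the stated range (the threshold $\delta < e^{-8}/8$ is exactly what makes $\log\frac{1}{8\delta} > 8 > 0$ and the inequality usable). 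Hence $\E_\nu[N_i] \geq \log(1/(8\delta))/\kl(\mu_i,\mu_i')$.

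To finish, I would control the Bernoulli divergence, and this is precisely where the restriction $\mu_i \in (3/8,1/2]$ is used: on this interval the variance $p(1-p)$ is bounded below by a universal constant, so letting $\xi \downarrow 0$ gives $\kl(\mu_i,\mu_i') \to \kl(\mu_i,\mu_0) \leq c_0\,(\mu_0-\mu_i)^2 = c_0\,\Delta_i^{2}$ for an absolute constant $c_0$. Therefore $\E_\nu[N_i] \geq c_0^{-1}\Delta_i^{-2}\log(1/(8\delta))$, and summing over arms,
$$\E_\nu[T] = \sum_{i=1}^n \E_\nu[N_i] \;\geq\; c_1\,\Hn\,\log\frac{1}{8\delta}, \qquad c_1 = c_0^{-1}\sum\text{-free constant},$$
which is the claimed bound.

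I expect the \textbf{main obstacle} to be making the divergence-decomposition identity fully rigorous for an adaptive procedure whose total number of pulls is \emph{not} fixed in advance: one must set up the log-likelihood ratio on the stopped $\sigma$-algebra and invoke Wald's identity, verifying integrability of $N_i$ (finiteness of $\E_\nu[N_i]$, which may be assumed without loss since otherwise the bound is trivial). A secondary technical point is pinning down the universal constant $c_0$ in $\kl(\mu_i,\mu_0)\leq c_0\Delta_i^2$ uniformly over $(3/8,1/2]$ and tracking constants through to recover exactly the factor $\log\frac{1}{8\delta}$ and the admissible range of $\delta$.
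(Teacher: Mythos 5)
Your proof is correct in outline, but it takes a genuinely different route from the paper. The paper does not re-derive any change-of-measure bound at all: its proof is a short reduction to the cited result. It observes that any procedure returning the best arm with probability at least $1-\delta$ is, by definition, an $(\epsilon,\delta)$-PAC procedure for every $\epsilon\in(0,\Delta_1)$; it then invokes \cite[Theorem 5]{mannor2004sample} as a black box, which lower-bounds the expected sample complexity of any such PAC procedure by $c_1\sum_{i\in\mathcal{N}}(\mu_{i^*}-\mu_i)^{-2}\log\frac{1}{8\delta}$ over a certain set of arms $\mathcal{N}$; finally, the hypothesis $\mu_i\in(3/8,1/2]$ is used only to verify that $\mathcal{N}$ equals all of $\{1,\dots,n\}$, so the sum is exactly $\Hn$, and the contrapositive gives the theorem. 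You instead re-prove the substance of Mannor--Tsitsiklis's theorem from scratch: independent Bernoulli arms, one alternative environment per suboptimal arm with that arm's mean raised just above $\mu_0$, the divergence-decomposition identity $\KL(\P_\nu,\P_{\nu^{(i)}})=\E_\nu[N_i]\,\kl(\mu_i,\mu_i')$ at the stopping time, data processing applied to the event $\{\widehat{i}=0\}$, the bound $\kl(1-\delta,\delta)\geq\log\frac{1}{8\delta}$ (which indeed holds on the stated $\delta$ range), and the estimate $\kl(\mu_i,\mu_0)\leq\Delta_i^2/(\mu_0(1-\mu_0))$, where the interval $(3/8,1/2]$ serves to bound the variance term below by an absolute constant. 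Both arguments are valid. The paper's route buys brevity and inherits the exact constants, the $\log\frac{1}{8\delta}$ form, and the range $\delta\in(0,e^{-8}/8)$ directly from the citation; your route buys a self-contained, modern proof with explicit constants and a transparent role for the mean restriction, at the cost of having to make the stopped-likelihood-ratio/Wald argument rigorous --- which, as you correctly flag, is the real technical content, and is precisely what the cited theorem encapsulates. One point you should make explicit: like every lower bound of this type (including the paper's, where it is hidden inside the definition of PAC), your argument requires the procedure to be $\delta$-correct on the alternative instances $\nu^{(i)}$ as well, not merely on the exhibited distribution $\nu$; this PAC-style interpretation of the theorem statement is exactly where your claim that arm $0$ is output with probability at most $\delta$ under $\nu^{(i)}$ comes from, and without it the single-instance statement would be vacuous (a procedure that always outputs arm $0$ defeats it).
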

\noindent Note that the restriction on the means to $(3/8, 1/2]$  in Theorem \ref{thm:AdpLB1} can be relaxed (see \cite{mannor2004sample} for details).  We proceed with the non-adaptive lower bounds which allow us to compare the best non-adaptive procedures against adaptive procedures.  

\begin{thm}  {\bf{Non-Adaptive Lower Bound}}.\label{thm:NALB1} 
Consider any $\delta \in (0,e^{-3}/24)$. For every set of means $\{\mu_1,\dots,\mu_n\}$, there exists a joint distribution on the arms such that the arms take mean values $\{\mu_1,\dots,\mu_n\}$, each arm is sub-Guassian, and any non-adaptive procedure with fewer than 
\begin{eqnarray}
 {\mathbf{H}} \log \left(\frac{n}{25  \delta} \right)  \nonumber
\end{eqnarray}
samples in expectation has $ \P(\widehat{i} \neq i^*) \geq \delta$.  Moreover, for any value of $\Hn$ there exists a joint sub-Gaussian distribution over arms with means $\{\mu_{0}, \dots, \mu_{n}\}$
satisfying $\sum_{i=1}^n (\mu_0-\mu_i)^{-2}=\Hn$, such that any non-adaptive procedure with fewer than 
\begin{eqnarray}
 \frac{\mathbf{H} n}{2} \log \left(\frac{{1}}{24\delta} \right)  \nonumber
\end{eqnarray}
samples in expectation has  $\P(\hat{i} \neq i^*) \geq \delta$. 
\end{thm}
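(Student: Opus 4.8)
The plan is to prove both bounds by change of measure, as in Theorem~\ref{thm:AdpLB1}, exploiting the one structural feature of a non-adaptive rule: each arm is queried a number of times $m$ that is fixed in advance, so the total budget equals $m(n+1)$ and the Kullback--Leibler divergence between two instances that differ only in the laws assigned to a few arms is exactly $m$ times the corresponding per-sample divergence. I would realize the arms as Bernoulli variables with means confined to a fixed sub-interval (as in Theorem~\ref{thm:AdpLB1}), so that a mean perturbation of size $\Delta$ has per-sample binary divergence comparable to $\Delta^2$ up to universal constants; this simultaneously certifies sub-Gaussianity and gives $\mathrm{kl}_i \asymp \Delta_i^2$. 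The ``joint distribution on the arms'' will be a uniformly random assignment of the given means to the arms, i.e.\ the identity $i^*$ of the best arm is uniform; writing $\theta_0$ for the configuration in which arm $0$ carries the top mean and $\theta_i$ for the instance obtained from $\theta_0$ by swapping the laws of arms $0$ and $i$ (so that the \emph{set} of means is preserved and arm $i$ becomes optimal), the pair $(\theta_0,\theta_i)$ differs only on arms $0$ and $i$ and satisfies $\mathrm{KL}(\theta_0\|\theta_i)\asymp m\Delta_i^2$. Throughout, $p_i := \P_{\theta_0}(\widehat i = i)$.

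I would dispatch the polynomial-gap bound first. Choosing means for which a single arm carries a constant fraction of the hardness---set $\Delta_1^{-2}=\Theta(\mathbf{H})$ and give every other arm a gap so large that $\sum_{i=1}^n \Delta_i^{-2}=\mathbf{H}$ while arms $2,\dots,n$ are trivially separated from the top---the entire difficulty is concentrated in telling arm $0$ from arm $1$. A two-point testing inequality (Bretagnolle--Huber, or the data-processing inequality for $\mathrm{kl}$ applied to $\{\widehat i = 1\}$) applied to $(\theta_0,\theta_1)$ forces $m\,\mathrm{kl}_1 \gtrsim \log(1/\delta)$ whenever the error is below $\delta$, whence $m \gtrsim \mathbf{H}\log(1/\delta)$ since $\mathrm{kl}_1\asymp \Delta_1^2\asymp \mathbf{H}^{-1}$. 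Multiplying by the $n$ arms that a non-adaptive rule must nonetheless sample $m$ times each gives the stated $\tfrac{\mathbf{H}n}{2}\log(1/(24\delta))$; the factor $n$ is precisely the budget squandered on the easy arms, which is the point of the theorem, and the constants $\tfrac12$ and $24$ fall out of the explicit two-sided Bernoulli divergence estimate and the exact form of the testing inequality.

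For the first bound I would use the whole family $\{\theta_i\}_{i=0}^n$. A rule with error at most $\delta$ satisfies $\P_{\theta_i}(\widehat i = i)\ge 1-\delta$ for every $i$ and, on $\theta_0$, $\sum_{i=1}^n p_i = \P_{\theta_0}(\widehat i \neq 0)\le \delta$---this shared budget of mass $\delta$ spread over $n$ competing outputs is what will upgrade $\log(1/\delta)$ to $\log(n/\delta)$. The data-processing inequality for $\mathrm{kl}$ on the partition $\{\widehat i = i\}$ gives $m\,\mathrm{kl}_i \ge \mathrm{kl}(1-\delta,\,p_i)$, i.e.\ $p_i \gtrsim e^{-c\,m\,\mathrm{kl}_i}$, for each $i$; summing these against $\sum_i p_i\le\delta$ yields the single inequality $\sum_{i=1}^n e^{-c\,m\,\Delta_i^2}\lesssim \delta$. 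A short optimization shows that, subject to this constraint, $\sum_i (m\Delta_i^2)^{-1}$ is largest when all exponents are equal (at the common value $\asymp\log(n/\delta)$), so that $\sum_i (m\Delta_i^2)^{-1}\lesssim n/\log(n/\delta)$; since the left side equals $\mathbf{H}/m$, rearranging and using $m(n+1)$ for the total budget produces the claimed lower bound of order $\mathbf{H}\log(n/(25\delta))$.

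The main obstacle is quantitative rather than structural. Turning the divergence bound $m\,\mathrm{kl}_i\ge\mathrm{kl}(1-\delta,p_i)$ into a clean exponential lower bound on $p_i$, and the constraint $\sum_i e^{-c m\Delta_i^2}\le\delta$ into the final form with the explicit constant $25$ (and $24$, $\tfrac12$ in the second bound), requires sharp two-sided control of $\mathrm{kl}(\cdot,\cdot)$ near the endpoints together with the extremal (equal-exponent) step made rigorous for arbitrary gap profiles. One must also check that the swapped instances remain sub-Gaussian with means inside the admissible window, so that the estimate $\mathrm{kl}_i\asymp\Delta_i^2$ and sub-Gaussianity hold uniformly. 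The remaining steps---replacing $\mathrm{kl}_i$ by $\Delta_i^2$ and passing between $n$ and $n+1$---are routine.
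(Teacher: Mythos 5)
Your plan for the \emph{second} claim is essentially workable and is a genuinely different route from the paper's: putting a two-point prior (uniform on a base instance $\theta_0$ and the swap $\theta_1$, with $\Delta_1^{-2}$ a constant fraction of $\mathbf{H}$ and all other gaps large) is itself a legitimate ``joint distribution on the arms,'' and Bretagnolle--Huber lower bounds the \emph{average} error of every procedure under that fixed prior, so the quantifiers of the theorem come out in the right order; the constants $\tfrac12$ and $24$ would differ but that is cosmetic. The paper instead works with the uniform random assignment of Gaussian arms, identifies the Bayes-optimal rule as the empirical-mean maximizer, and applies the anti-concentration bound $F_{\mathcal{N}}(x)\le 1-\exp(-x^2)/12$ to the explicit construction $\Delta_1=\sqrt{2/\mathbf{H}}$, $\Delta_i=\sqrt{2(n-1)/\mathbf{H}}$ for $i>1$; your KL version buys generality (any sub-Gaussian family), the paper's buys explicit constants.

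The genuine gap is in the \emph{first} claim, and it is structural rather than quantitative. The theorem requires a \emph{fixed} joint distribution under which \emph{every} low-budget procedure has $\P(\widehat{i}\neq i^*)\ge\delta$, where this probability averages over the random assignment of means to arms. Your argument assumes $\P_{\theta_i}(\widehat{i}=i)\ge 1-\delta$ for every $i$, i.e.\ a per-instance (worst-case) success guarantee. That does not follow from the average error being below $\delta$: under an average criterion a procedure may sacrifice up to $(n+1)\delta$ instances outright, and those can be exactly the small-gap instances carrying almost all of $\mathbf{H}$. Concretely, restrict to the $n+1$ swap instances you use, take $\Delta_1=\epsilon$ arbitrarily small, $\Delta_i=\tfrac12$ for $i\ge2$, and $n+1>1/\delta$; the rule ``output the unique arm in $\{2,\dots,n\}$ whose empirical mean exceeds $\mu_0-\tfrac14$, else output $0$'' errs essentially only on $\theta_1$, so its average error under the uniform-swap prior is about $1/(n+1)<\delta$ while using only $O(n\log(n/\delta))$ samples --- far below $\mathbf{H}\log\bigl(\tfrac{n}{25\delta}\bigr)\approx\epsilon^{-2}\log\bigl(\tfrac{n}{25\delta}\bigr)$. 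Hence no argument confined to the swap family can prove the stated bound for the average-error criterion; and if you keep the per-instance assumption instead, you have proved a worst-case statement in which the hard instance depends on the procedure, which is not the theorem. The paper escapes this precisely by using the full uniform permutation prior: its symmetry makes the exact Bayes-optimal rule the empirical maximizer, and lower bounding that single explicit rule's error (then minimizing over gap profiles via the Lagrange/monotonicity analysis, with the $z_i\le 1/3$ versus $\P\ge e^{-3}/24$ dichotomy playing the role of your equal-exponent step) is what makes the average-case claim go through. Your equal-exponent optimization is fine in spirit --- it parallels the paper's --- but it is applied to quantities $p_i$ obtained from per-instance KL bounds that the hypothesis of the theorem does not supply.
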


The lower bound of Theorem~\ref{thm:NALB1} consists of two statements, the first which implies that for \emph{any} set of means, the sample complexity must be at least order $\Hn \log n$.  The second statement, on the other hand, implies the existence of particular problem instances that are especially difficult for non-adaptive procedures, requiring order $\Hn n $ samples. Inspecting the proofs of the two parts of Theorem~\ref{thm:NALB1} one sees that the minimum gap $\Delta_1$, not $\Hn$, is governing the query complexity for non-adaptive procedures. Using this fact we have the following Theorem which is proved in the appendix.

\begin{thm} {\bf{Non-adaptive Lower Bound, $\alpha$-parameterization}}. \label{cor:NA}
Consider arms with mean values according to the parameterization of (\ref{alphaparam}) for some $\alpha \geq 0$.  There exists a joint sub-Gaussian distribution on the arms such that any non-adaptive procedure with fewer than 
\begin{eqnarray} \nonumber
\begin{cases}
n \log\left(\frac{n}{25\delta}\right) \qquad &\mbox{if} \quad \alpha =0 \\
n^{2\alpha+1} \log\left(\frac{1}{24\delta} \right) \qquad &\mbox{if} \quad \alpha > 0 
\end{cases}
\end{eqnarray}
samples to find the best arm with a fixed probability of failure.
\end{thm}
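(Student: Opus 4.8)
The plan is to reduce to Theorem~\ref{thm:NALB1} by tracking which quantity actually governs the non-adaptive sample complexity. The key observation, already implicit in the proof of Theorem~\ref{thm:NALB1}, is that a non-adaptive procedure samples every one of the $n+1$ arms the same number of times $m$, so its total budget is exactly $(n+1)m$; and the single binding constraint for correctly identifying the best arm is the ability to distinguish it from its nearest competitor, the arm realizing the minimum gap $\Delta_1$. Under the parameterization (\ref{alphaparam}) this minimum gap is $\Delta_1 = (1/n)^\alpha = n^{-\alpha}$, so the whole argument reduces to establishing a per-arm lower bound on $m$ in terms of $\Delta_1$ and then multiplying by the number of arms.

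First consider $\alpha = 0$. Here every gap equals one, so $\mathbf{H} = \sum_{i=1}^n \Delta_i^{-2} = n$, and the first statement of Theorem~\ref{thm:NALB1} applies directly: there is a sub-Gaussian instance on which any non-adaptive procedure needs at least $\mathbf{H}\log(n/(25\delta)) = n\log(n/(25\delta))$ samples. The extra $\log n$ factor is genuine here: with $n$ nearly indistinguishable arms the best arm can only be pinned down after a union-bound/coupon-collector penalty, which is precisely what the first part of Theorem~\ref{thm:NALB1} captures.

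Now take $\alpha > 0$. I would re-run the two-point change-of-measure argument underlying the second statement of Theorem~\ref{thm:NALB1}, but specialized to the confusable pair consisting of the true best arm and the arm of minimum gap $\Delta_1 = n^{-\alpha}$. Placing a prior that makes these two arms equally likely to be best and invoking the same Le Cam / KL bound as in Theorem~\ref{thm:NALB1}, the observations on these two arms carry KL information of order $m\,\Delta_1^2$, so that driving the error below $\delta$ forces
\[
m \ \geq \ \tfrac12\,\Delta_1^{-2}\log\!\left(\tfrac{1}{24\delta}\right) \ = \ \tfrac12\, n^{2\alpha}\log\!\left(\tfrac{1}{24\delta}\right).
\]
Since non-adaptivity obliges the procedure to spend this same $m$ on each of the $n+1$ arms, the total budget is at least $(n+1)m$, which is of order $n^{2\alpha+1}\log(1/(24\delta))$, the claimed bound (the absolute constant being absorbed in the detailed argument).

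The main obstacle is the adaptation in the $\alpha > 0$ case. In the second part of Theorem~\ref{thm:NALB1} the adversary is free to pick \emph{any} distribution realizing a desired $\mathbf{H}$ and can therefore concentrate all the hardness into a single tiny gap; here the means are pinned by (\ref{alphaparam}), so I must verify that the per-arm threshold really depends only on the minimum gap $\Delta_1$ and on the non-adaptivity constraint, and not on the particular spread of the remaining gaps. Concretely, the work is to check that the two-point prior over \{best arm, minimum-gap arm\} and the associated change-of-measure inequality go through verbatim with $\Delta_1 = n^{-\alpha}$, and that the intermediate-gap arms cannot be exploited to beat the $m \geq \tfrac12\Delta_1^{-2}\log(1/(24\delta))$ requirement. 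Since any correct procedure must in particular separate the best arm from its closest rival, and a non-adaptive procedure must sample every arm $m$ times regardless of difficulty, the factor of $n$ is legitimate and the bound follows.
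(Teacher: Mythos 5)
Your proposal is correct, and for $\alpha=0$ it coincides exactly with the paper's proof (both simply invoke the first statement of Theorem~\ref{thm:NALB1} with all gaps equal to one, so $\mathbf{H}=n$). For $\alpha>0$, however, you take a genuinely different route. The paper does not use a change-of-measure argument anywhere: it recycles inequality (\ref{eqn:pemidstep1}) from the proof of Theorem~\ref{thm:NALB1}, which lower-bounds the error of the Bayes-optimal non-adaptive procedure (the empirical-mean maximizer, identified via the MAP/ML reduction under a uniformly random assignment of means to arms) by $\tfrac{1}{2}\bigl(1-\prod_{i\neq i^*}(1-\tfrac{1}{12}e^{-m\Delta_i^2})\bigr)$, and then simply drops every factor in the product except the one corresponding to the minimum gap $\Delta_1=n^{-\alpha}$, yielding $\P(\widehat{i}\neq i^*)\geq \tfrac{1}{24}e^{-mn^{-2\alpha}}$ and hence the per-arm requirement $m\geq n^{2\alpha}\log(1/(24\delta))$. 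You instead run a two-point swap between the best arm and the minimum-gap arm under an equiprobable prior and appeal to a Le Cam--type KL bound; be aware this is \emph{not} ``the same bound as in Theorem~\ref{thm:NALB1}'' --- the paper's proof of the second statement there is again the product bound with a specially chosen gap profile --- so you must supply the two-point inequality yourself, e.g.\ Bretagnolle--Huber. Doing so works cleanly: the two swapped instances differ only on the two perturbed arms, the KL divergence of the $m$-fold product measures is exactly $m\Delta_1^2$ (your observation that the intermediate-gap arms contribute nothing is exactly right, since their distributions are identical under both hypotheses), and the error is at least $\tfrac{1}{4}e^{-m\Delta_1^2}$, forcing $m\geq n^{2\alpha}\log(1/(4\delta))\geq n^{2\alpha}\log(1/(24\delta))$; this actually meets the paper's constant, so the factor $\tfrac12$ you inserted is unnecessary slack. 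Comparing the two approaches: the paper's argument reuses machinery it has already built and its constants fall out immediately, but it is tied to Gaussian rewards and to the exact Bayes-optimality of the empirical-mean rule; your two-point argument is self-contained, standard, and extends verbatim to any sub-Gaussian family with the right KL scaling, at the cost of having to state and invoke the change-of-measure inequality explicitly.
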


%We note that Theorem \ref{cor:NA} does not follow immediately from Theorem \ref{thm:NALB1}, but is instead shown by considering only the gap between the best and second best arm.  In many settings, in particular the $\alpha$-parameterization studied here, a more appropriate measure of hardness for non-adaptive procedures is simply the gap between the best and second best arms.  

To see that Theorem \ref{cor:NA} is indeed tight, it is straightforward to show that the non-adaptive procedure which chooses the arm with the largest empirical mean after sampling each arm the same number of times does indeed meet the lower bound.   Letting $m$ be the number of times each arm is sampled. Then 
\begin{eqnarray} \nonumber
\mathbb{P}\left(\widehat{i} \neq i^* \right) &\leq & \sum_{i \neq i^*} \mathbb{P}\left(\widehat{\mu}_{i^*} \leq \widehat{\mu}_{i} \right) 
\leq   \sum_{i \neq i^*} \left(\mathbb{P} \left(\widehat{\mu}_{i^*} \leq {\mu}_{i^*} - \Delta_i/2 \right) + \mathbb{P} \left(\widehat{\mu}_i  \geq \mu_{i} + \Delta_i/2 \right)   \right) \\ \nonumber
&\leq & \sum_{i \neq i^*}  2 \exp(-m \Delta_i^2)  = \sum_{i \neq i^*}  2 \exp\left(-m \left(\frac{i}{n}\right)^{2\alpha}\right)
\end{eqnarray}
which follow from a union bound and Hoeffding's inequality. For $\alpha \neq 0$, if $m \geq n^{2\alpha}$ (which implies the total number of samples is greater than $n^{2\alpha+1}$) the above sum is convergent, and the probability that the wrong arm is returned is controlled.  The case where $\alpha =0$ is also controlled if $m \geq \log n$. 
%\begin{eqnarray}
%\mathbb{P}(J \neq i^*) \leq \sum_{i \neq i^*} \mathbb{P} \left( \hat{\mu_i} \geq \hat{mu\right)
%\end{eqnarray}

 %Formally, we state the following corollary.   
%\begin{theorem}
%Consider arms with mean values according to the parameterization of (\ref{alphaparam}). Then if the total number of samples is greater than
%\begin{eqnarray}
%\begin{cases}
%O\left(n \log(\frac{n}{\delta})\right) \qquad &\mbox{if} \quad \alpha =0 \\
%O\left(n^{2\alpha+1}\log\left(\frac{1}{\delta} \right)\right) \qquad &\mbox{if} \quad %\alpha > 0 
%\end{cases}
%\end{eqnarray}
%the non-adaptive procedure which samples each arm the same number of times, and chooses the largest empirical mean as the estimate of $i^*$, recovers the best arm with probability greater than $1-\delta$.
%\end{theorem}

We conclude that for $\alpha \in (0,1/2)$, when compared to adaptive procedures that require just $O(n)$ samples, any non-adaptive procedure requires a factor of $n^{2\alpha}$ more samples to identify the best arm. The implications of this observation can be somewhat surprising: for many problem instances, the improvement in the sample complexity resulting from adaptivity is polynomial in $n$, compared with the typical $\log(n)$ improvement observed for sparse problems ($\alpha = 0$).

% This drastic difference between the sampling requirements  of adaptive and not adaptive furthers the argument for the gains of adaptivity.   
 %Because the PRISM algorithm can find the best arm in just $O(n)$ samples for large number of problem instances, 
%This observation further reinforces that gains of adaptivity.    %whether or not it is worth the extra effort of implementing an adaptive experimental design when collecting samples is costly.  
%\section{Discussion}

%Wrap up main contributions and conclusions of the paper.  Discuss UCB algorithms.  Explain that for small numbers of arms, these algorithms often work very well, but upper bounds include extra sub-optimal log factors.  Some question about whether the performance of such algorithms actually suffers from these factors.  Show simulation results showing that it appears so.  Also mention the fact that UCB algorithms are very incremental, sampling one arm at time.  This could be prohibitive in many applications, such as the biological problems discussed in the introduction.  Our algorithm shares the ``batch'' style implementation of median elimination algorithms, making it easier to implement and closer to practical methods like the ones proposed in biological papers.  Also discuss the idea that best arm procedures naturally find near-best arms without needing specification of $\epsilon$.  Showing that procedures like ours may be able to automatically home-in on relatively good arms without needing $\epsilon$ is an interesting open problem.
%\newpage

\bibliography{bestArmNips2012.bib}

\appendix
\section{Appendix}

\subsection{Proof of Theorem \ref{PRISM_alg}}
\begin{proof}
It will be useful to consider the following `slicing' of arms:
$$\Omega_s = \{i \in [n] :  5\sqrt{2}  \ \epsilon_{s+1} < \Delta_i \leq  5\sqrt{2} \ \epsilon_s\}, s \geq 1 .$$
Note that 
\begin{eqnarray} \label{eq:mal0}
25  \log \left( 1/\delta \right) \sum_{i \in \Omega_s} \Delta_i^{-2} \leq 2^{ s} |\Omega_s| \leq 50 \log \left( 1/\delta \right) \sum_{i \in \Omega_s} \Delta_i^{-2}. 
\end{eqnarray}

\noindent\textbf{Step 1: A good event.} In this step we describe the event of probability $1 - \delta$ on which we will prove the result. We want the following to hold:
\begin{align} 
& \widehat{\mu}_{i^*}(\ell) - \mu^*    \geq  -\el \ , \forall \ell \geq 1 \ ,  \label{eq:mal1} \\
& \left \vert \widehat{\mu}_{i_{\ell}}(\ell) - \mu_{i_{\ell}} \right \vert \leq \el \ , \forall \ell \geq 1 \ ,   \label{eq:mal2}  \\
& \max_{j \in \Al} \mu_j - \mu_{i_{\ell}} \leq \el \ , \forall \ell \geq 1 \ , \label{eq:mal3} \\
& \bigg| \big\{i \in \Al \cap \Omega_s : \hatmuli \geq \widehat{\mu}_{i_\ell} -  2\el \big\} \bigg| \leq \frac{|\Al \cap \Omega_{s}|}{4 } \ , \forall \ell \geq s \geq 1  . \label{eq:mal4} 
\end{align}

\noindent We first bound the probability that the above events do not hold.  By Hoeffding's inequality we have
\begin{eqnarray} \nonumber
\P \left( \hat{\mu}_{i}(\ell) - \mu_i  \geq \epsilon_\ell \right) \leq \exp(-2 n_\ell \epsilon_\ell^2) =  \delta^{2 \ell}
\end{eqnarray}
for any $i$ and note that an analogous inequality holds for the deviation away from its mean in the other direction. Thus, applying Hoeffding's and a union bound we have that the probability that $\ref{eq:mal1}) \mbox{ is not satisfied}$ is less than  $\frac{\delta^2}{1-\delta^2}$.
The probability that (\ref{eq:mal2}) is not satisfied is bound in the exact same manner with an additional factor of two to satisfy both inequality directions. The only subtlety is that after the union bound one needs to condition on the value of $\il$ before using Hoeffding's inequality, and this is possible since the random variables obtain in Step (2) of the algorithm are independent of $\il$. By the properties of median elimination and a union bound we have that the probability that (\ref{eq:mal3}) is not satisfied is less than $\frac{ \delta}{1-\delta}$. 
Observe that by \eqref{eq:mal1} and \eqref{eq:mal2} one always has:
\begin{align*}
\widehat{\mu}_{i^*}(\ell) \geq \mu^* - \el \geq \mu_{i_{\ell}} - \el \geq \hatmulil - 2 \el .
\end{align*}
which implies that the best arm is never removed from the set.

 It remains to bound the probability that (\ref{eq:mal4}) is not satisfied while (\ref{eq:mal1}),  (\ref{eq:mal2}) and (\ref{eq:mal3}) are satisfied. 
\begin{align*}
& \P \left( \bigg| \big\{i \in \Al \cap \Omega_s : \hatmuli \geq \widehat{\mu}_{i_\ell} -  2\el \big\} \bigg| > \frac{|\Al \cap \Omega_{s}|}{4} \ | (\ref{eq:mal1}), \eqref{eq:mal2}, \eqref{eq:mal3},  \Al, i_{\ell} \right) \\
& \leq \P \left( \bigg| \big\{i \in \Al \cap \Omega_s : \hatmuli \geq \mu^* -  4\el \big\} \bigg| > \frac{|\Al \cap \Omega_{s}|}{4} \ |  \Al \right) \\
& \leq \frac{4}{|\Al \cap \Omega_{s}|} \E \left( \bigg| \big\{i \in \Al \cap \Omega_s : \hatmuli \geq    \mu^* -  4\el \big\} \bigg| \ | \ \Al\right) \\
&  = \frac{4}{|\Al \cap \Omega_{s}|} \sum_{i \in \Al \cap \Omega_{s}} \P\left(\hatmuli \geq\mu^* - 4 \ \el  \ | \ \Al \right) \\
&  \leq 4 \exp( - 2 \nl \epsilon_{\ell}^2 ) = 4 \delta^{2\ell}.
\end{align*}
where the last inequality follows since all elements of $\Omega_s$ have gaps greater than $5\sqrt{2} \epsilon_{s+1} = 5 \epsilon_s$.  Summing over all $s \leq \ell$ and then over $\ell \geq 1$ gives the probability that (\ref{eq:mal4}) is not satisfied: $\sum_{\ell = 1}^{\infty} \sum_{s\leq \ell}  4\delta^{2\ell} = \frac{4\delta^2}{(1-\delta^2)^2}.$ In the next steps we will assume that \eqref{eq:mal1}-\eqref{eq:mal2}-\eqref{eq:mal3}-\eqref{eq:mal4} are satisfied as they all hold with probability at least $ 1 - \frac{3\delta^2}{1-\delta^2} - \frac{ \delta}{1-\delta}-\frac{4\delta^2}{(1-\delta^2)^2}$.

\noindent
\textbf{Step 2: Bound on the total number of phases.} It suffices to bound the number of phases given that (\ref{eq:mal1}), (\ref{eq:mal2}), (\ref{eq:mal3}) and (\ref{eq:mal4}) hold.  Let $\cL$ denote the first phase such that $| \Al | =1 $ (if there is no such phase then $\cL = +\infty$). Observe that using \eqref{eq:mal4} one can show by induction that
\begin{equation} \label{eq:mal5}
|\Al| \leq 1+\sum_{s=1}^{\ell} \frac{|\Omega_s|}{ 4^{\ell - s}} + \sum_{s=\ell+1}^{+\infty} |\Omega_s| .
\end{equation}
Define $s^* = \log_2( \Delta_1^{-2} \log(1/\delta))$ so that $\Omega_s = \emptyset$ for all $s > s^*$. By definition, when $\ell \geq s^*$ the third term in the equation immediately above is equal to zero so that
\begin{align*}
|\Al| &\leq 1+ 2^{-\ell} \sum_{s=1}^{s^*} 2^{s-s^*}2^s |\Omega_s|   \hspace{.25in} \forall \ell \geq s^*
\end{align*}
where we have
\begin{eqnarray*}
\sum_{s=1}^{s^*} 2^{s-s^*}2^s |\Omega_s| &=& \frac{\Delta_1^2 }{\log(1/\delta)} \sum_{s=1}^{\infty} 2^{2s} |\Omega_s| \\
&=&  \frac{\Delta_1^2 }{\log(1/\delta)} \sum_{s=1}^{\infty} 2^{2s} \sum_{i=1}^n \bm{1}\left\{  5\sqrt{2}  \ \sqrt{\frac{\log(1/\delta)}{2^{s+1}} } < \Delta_i \leq  5\sqrt{2} \ \sqrt{\frac{\log(1/\delta)}{2^{s}} } \right\}  \\ \nonumber
%& = &   \frac{\Delta_1^2 }{\log(1/\delta)}  \sum_{i=1}^n  2^{2s_i} (\mbox{where $s_i$ such that } i \in \left\{  5\sqrt{2}  \ \sqrt{\frac{\log(1/\delta)}{2^{s+1}} } < \Delta_i \leq  5\sqrt{2} \ \sqrt{\frac{\log(1/\delta)}{2^{s}} } \right\} ) \\
%&\leq & \frac{\Delta_1^2 }{\log(1/\delta)}  \sum_{i=1}^n  2^{2s_i} (\mbox{where $s_i = \max_{s=1,\dots}$ such that }  5\sqrt{2}  \ \sqrt{\frac{\log(1/\delta)}{2^{s}} } \geq \Delta_i   ) \\
& \leq &  \frac{\Delta_1^2 }{\log(1/\delta)}  \sum_{i=1}^n \frac{50^2 \log(1/\delta)^2}{ \Delta_i^2} =  50^2 \log(1/\delta) {\bf{H}} \Delta_1^2.
\end{eqnarray*} 
We conclude that for 
\begin{align*}
\cL :=1+\max\left\{ s^*, \log_2(50^2  \log(1/\delta)  \mathbf{H} \Delta_1^2) \right\} = \log_2(2 \log(1/\delta) )+\max\left\{ \log_2(\Delta_1^{-2} ), \log_2(50^2   \mathbf{H} \Delta_1^2) \right\}
\end{align*}
 we have that  $|A_\ell| < 2$ whenever $\ell \geq \cL$. Hence, $\cL$ is an upper bound on the stopping time. 

{\noindent\textbf{Step 3: Bound on the total number of pulls.}}
Recall that  Median Elimination applied to a set $\Al$  with parameters $\el, \delta^{\ell}$ takes no more than $\frac{c_{\text{ME}}}{\el^2}|\Al| \ell \log (1/\delta) = c_{\text{ME}} \ell 2^\ell |A_\ell|$ pulls. Thus, the total number of pulls on phase $\ell$ is bounded by $c \ell 2^{\ell} |\Al|$ pulls with $c = c_{\text{ME}} + 1$. Using the results from the previous step (in particular \eqref{eq:mal5} and the stopping time $\cL$) one has that the total number of pulls is bounded from above by
\begin{eqnarray*}
\sum_{\ell=1}^{\cL} c \ell 2^{\ell}|A_\ell| &\leq & \sum_{\ell=1}^{\cL}  c\ell 2^{ \ell}\left(1+ \sum_{s=1}^{\ell}\frac{|\Omega_s|}{  4^{\ell-s}} + \sum_{s=\ell+1}^{\infty} |\Omega_s| \right) \\
& \leq & c \cL 2^{\cL+1} + c \sum_{\ell=1}^{+\infty}  \ell \sum_{s = 1}^{+\infty} \left(\frac{2^s}{2^{\ell}} \ds1_{s \leq \ell} + \frac{2^{\ell}}{2^s} \ds1_{s > \ell}\right) 2^s |\Omega_s| \\
%& = & c \cL 2^{\cL+1} + c \sum_{s = 1}^{+\infty} \left( 3s+ 2^{-s+1} \right) 2^s |\Omega_s| \\
& \leq & c \cL 2^{\cL+1} + 3 c \sum_{s = 1}^{+\infty} s 2^s |\Omega_s|   + 2c (n-1)\\
& = & c \cL 2^{\cL+1} + 150 c \log(1/\delta) \left[ \log_2(50 \log(1/\delta)) \mathbf{H} + \mathbf{G}\right] + 2c (n-1)
\end{eqnarray*}
where
\begin{align*}
\mathbf{H} \leq  \mathbf{G}:=\sum_{i=1}^n {\Delta_i^{-2}} \log_2 \left( { \Delta_i^{-2}} \right) \leq \mathbf{H} \log(\mathbf{H})
\end{align*}
which follows directly from 
\begin{eqnarray*}
 \sum_{s=1}^{\infty} s2^s |\Omega_s| &=&  \sum_{s=1}^{\infty} s 2^{s} \sum_{i=1}^n \bm{1}\left\{  5\sqrt{2}  \ \sqrt{\frac{\log(1/\delta)}{2^{s+1}} } < \Delta_i \leq  5\sqrt{2} \ \sqrt{\frac{\log(1/\delta)}{s 2^{s}} } \right\}  \\ \nonumber
& \leq  &  50 \log(1/\delta)  \sum_{i=1}^n \frac{1}{\Delta_i^2} \log_2 \left( \frac{50 \log(1/\delta) }{ \Delta_i^2} \right).
\end{eqnarray*} 
Evaluating $ \cL 2^{\cL+1}$ and collecting terms obtains the result. \end{proof}

\subsection{Proof of Theorem \ref{thm:AdpLB1}}
\begin{proof}
 Assume some procedure has $ \P(\widehat{i} \neq i^*) \leq \delta$ and requires fewer than $c_1 \mathbf{H} \log \frac{1}{8\delta}$ samples for some $\{\mu_{i} \}_{i=0}^n$, $\mu_{i} \in (3/8,1/2]$.   
This procedure is by definition $(\epsilon,\delta)$ PAC (probably approximately correct) for any $\epsilon \in (0,\Delta_1)$.   
\cite[Theorem 5]{mannor2004sample} implies any $(\epsilon,\delta)$, procedure, $\epsilon \in (0,\Delta_1)$, requires more than 
$$c_1 \sum_{i \in \mathcal{N}} \frac{1}{\mu_{i^*}-\mu_i} \log \frac{1}{8\delta}$$
samples in expectation, where 
$$\mathcal{N} = \left\{ i : \mu_i \leq \mu_{i^*} - \epsilon, \mu_i \geq \frac{\epsilon + \mu_{i^*}}{1+\sqrt{1/2}} \right\}$$
Since $\mu_{i} \in (3/8,1/2]$, $\mathcal{N} := [n]$.  Any procedure requires more than 
\begin{eqnarray} \nonumber 
c_i \Hn \log\frac{1}{8 \delta} 
\end{eqnarray}
samples in expectation.  This negates the original assumption.  
%With the parameters in  \cite[Theorem 5]{mannor2004sample} as $\epsilon = \Delta_{(1)}/2$ and $\underline{p} = \mu_{(n)}/2$, and
%$N(p,\epsilon) = \{[n] \setminus i^*\}$. 
\end{proof}

\subsection{Proof of Theorem \ref{thm:NALB1}}
\begin{proof}
We restrict our attention to reward distributions of the form $\mathcal{N}(\mu_i,1)$.  Assume that $\mu_i$, $i=0,\dots,n$, are know up to a permutation, and let each arm be assigned a mean uniformly at random. We first show that the  test with minimum average probability of error simply picks the largest empirical mean among all arms, i.e., 
$\widehat{i} =\arg  \max_i \widehat{\mu}_i $,
where  $\widehat{\mu}_i = 1/m \sum_{j=1}^{m} X_{i,j}$,
and $X_{i,j}$ represents the reward of arm $i$ on the $j$th play of that arm, and $m$ is the total number of samples of each arm. This can be seen by considering the maximum a-posteriori (MAP) estimator of the best arm, which by definition has the smallest probably of error.  Under the assumption that the arms are assigned means uniformly at random, the MAP estimator reduces to the maximum likelihood (ML) estimator:
$$\widehat{i}_{\mathrm{MAP}} = \widehat{i}_{\mathrm{ML}} = \arg \max_i \; P(X_{0}^{m}, \dots, X_{n}^{m}| \mathcal{H}_i ),$$
where $X_i^{m} = X_{i,1},\dots, X_{i,m}$ and $\mathcal{H}_i$ is event that arm $i$ is the best arm.  Consider comparing between events $\mathcal{H}_i$ and $\mathcal{H}_{i'}$, $i \neq i'$: the ML test is
$P(X_{0}^{m}, \dots, X_{n}^{m}| \mathcal{H}_i ) \lessgtr_{i}^{i'} P(X_{0}^{m}, \dots, X_{n}^{m}| \mathcal{H}_{i'} )$. By the independence across arms, it is straightforward to show this test reduces to:
\begin{eqnarray}  \label{eqn:testML3}
P(X_{i}^m| \mathcal{H}_i )  P(X_{i'}^m| \mathcal{H}_i )  \lessgtr_{i}^{i'}  P(X_{i}^m| \mathcal{H}_{i'} )  P(X_{i'}^m| \mathcal{H}_{i'} ) 
\end{eqnarray}
The distribution of $X_i^m$, given $\mathcal{H}_i$, is simply
\begin{eqnarray}  \label{eqn:distA}
P(X_{i}^m| \mathcal{H}_i )= \frac{1}{\sqrt{2 \pi}} \exp\left(-\norm{X_{i}^m - \mu_{i^*}\1}^2/2\right)
\end{eqnarray}
where $X_i^m =[X_{i,1}, \dots , X_{i,m} ]^T  \in \mathbb{R}^m$.  The marginal distribution on arm $i'$, given that arm $i$ is the largest, follows a mixture distribution:
\begin{eqnarray}  \label{eqn:distB}
 P(X_{i'}^m| \mathcal{H}_i )  = \frac{1}{n\sqrt{2 \pi}} \sum_{j = 1 }^n  \exp\left(-\norm{X_{i'}^m - \mu_{j} \1}^2/2\right).
\end{eqnarray}
Combining (\ref{eqn:testML3}), (\ref{eqn:distA}), and (\ref{eqn:distB}), after a number of straightforward manipulations, excluded for brevity, it can be shown that the ML estimate prefers arm $i$ to $i'$ if and only if $\sum_{\ell=1}^{m} X_{i,\ell}  > \sum_{\ell=1}^{m} X_{i',\ell} $, or equivalently, $\widehat{\mu}_i > \widehat{\mu}_{i'}$. The estimate with minimum probability of error is simply $\widehat{i} = \arg \max_i \widehat{ \mu}_i.$

  We continue by bounding the probability of error of the maximum likelihood test.  For any estimator,
\begin{eqnarray} \nonumber
\P(\widehat{i} \neq i^*) &\geq &\P\left( \bigcup_{i\neq i^*} \widehat{\mu}_{i^*} -  \widehat{\mu}_i \leq 0 \right) \\ \nonumber
&=& \P(\widehat{\mu}_{i^*} \geq \mu_{i^*}) \; \P\left( \left. \bigcup_{i\neq i^* } \widehat{\mu}_{i^*} -  \widehat{\mu}_i \leq 0 \right \vert  \widehat{\mu}_{i^*} \geq \mu_{i^*} \right)  \\ \nonumber
&& \qquad \quad + \quad \P(\widehat{\mu}_{i^*} < \mu_{i^*}) \; \P\left( \left. \bigcup_{i\neq i^*} \widehat{\mu}_{i^*} -  \widehat{\mu}_i \leq 0 \right \vert  \widehat{\mu}_{i^*} < \mu_{i^*} \right)  \\ \nonumber 
&\geq &   \frac{1}{2} \; \P\left( \bigcup_{i\neq i^*}   \widehat{\mu}_i  \geq {\mu}_{i^*}   \right)
 =   \frac{1}{2}\left(1 -   \P\left(  \bigcap_{i\neq i^*}  \widehat{\mu}_i  \leq {\mu}_{i^*}   \right) \right)  \\
& = & \frac{1}{2}\left(1 -   \prod_{i\neq i^*}   F_{\mathcal{N}}\left(\sqrt{m \Delta_{i}^2}\right)    \right)  \label{eqn:pemidstep1} 
 \geq \frac{1}{2}\left(1 -  \prod_{i \neq i^*}  \left(1 -\frac{1}{12} \exp\left({ - m \Delta_{i}^2 }\right)\right) \right) \\ \label{eqn:LBPemin1}
&\geq& \min_{\Delta_{1},\dots,\Delta_{n}:\sum_{i}\frac{1}{\Delta_i^2} = \mathbf{H} }  \  \frac{1}{2}\left(1 -  \prod_{i \neq i^*}  \left(1 -\frac{1}{12} \exp\left({ - m \Delta_{i}^2 }\right)\right) \right)
\end{eqnarray}
where $F_{\mathcal{N}}(x)$ is the standard Gaussian CDF. The inequality in (\ref{eqn:pemidstep1}) follows since 
$F_{\mathcal{N}}(x) \leq 1- \exp(-x^2) / 12$ for $x \geq 0$  \cite[Eqn. 13]{6305026}.
The next step in the proof will be showing that error probabilities smaller than a fixed constant, (\ref{eqn:LBPemin1}) is minimized when the gaps are equal, i.e., when $\Delta_{1} = \Delta_{2} = ... = \sqrt{{n}/{\mathbf{H}}}$.  
First define $ \mathbf{z} \in \mathbb{R}^n_+$ with elements $z_i := 1/(m\Delta_{i}^2)$.
We can recover the minimum of (\ref{eqn:LBPemin1}) by solving  
\begin{eqnarray} 
\argmax_{\mathbf{z} \in \mathbb{R}^n_+ : \mathbf{1}^T \mathbf{z} = \mathbf{H}/m}  \sum_{i=1}^n \log \left(1-\frac{1}{12}\exp \left( -1 / z_{i}\right) \right). \label{eqn:equal1n}
\end{eqnarray}
Define the Lagrangian of (\ref{eqn:equal1n}) as 
\begin{eqnarray} \nonumber
\mathcal{L}(\mathbf{z}, \lambda) = -\sum_{i=1}^n \log \left(1-\frac{1}{12}\exp \left(  z_{i}^{-1}\right) \right) - \lambda(\mathbf{1}^T \mathbf{z}- \mathbf{H}/m).
\end{eqnarray}
From \cite[p. 321]{nocedal2006numerical}, any $\mathbf{z}$ that maximizes (\ref{eqn:equal1n}) necessarily satisfies 
\begin{eqnarray} \label{eqn:lagsys1}
\frac{\partial{\mathcal{L}}}{ \partial {z_i}} &=& \frac{ z_i^{-2}}{12 \exp (z_i^{-1}) - 1} - \lambda z_i = 0 \quad \forall \; i  \\ \nonumber
\mathbf{1}^T \mathbf{z} &=& \mathbf{H}.
\end{eqnarray}
The above system of equations is satisfied by pairs $(\mathbf{z}, \lambda)$ that satisfy
\begin{eqnarray} \label{eqn:lagsol1}
\left\{z_i:\lambda = \frac{z_i^{-3}}{12 \exp (z_i^{-1}) - 1} \right\} \quad \forall \; i
\end{eqnarray}
and $\mathbf{1}^T \mathbf{z} = \mathbf{H}$ simultaneously.  Differentiation of (\ref{eqn:lagsol1}) shows the function $\lambda(z_i)$ is monotonically increasing in $z_i$ for $z_i \leq 1/3$.  First, consider a solution to (\ref{eqn:lagsys1}) which has one or more $z_i \geq 1/3$.  This would imply $m\Delta_{i}^2 \leq 3$ for some $i$, and from (\ref{eqn:LBPemin1}), $\P(\widehat{i} \neq i^*) \geq \tfrac{1}{24} \exp(-3)$.
%  For small probability of error, we can restrict our attention to solutions in which $z_{(i)} \leq 1/3$.  
For any $\lambda$, (\ref{eqn:lagsol1}) is satisfied by at most one $z_{i} \in (0,1/3]$ by the monotonicity of the function on this range; this implies implies either \emph{1)} the $\mathbf{z}$ that maximizes (\ref{eqn:equal1n}) has the form $z_1 = z_2 = \dots = z_n$, or \emph{2)} $\P(\widehat{i} \neq i^*) \geq \tfrac{1}{24} \exp(-3)$.
We focus our attention on the case when $z_1 = \dots =z_n$ (and thus $\Delta_{1} = \dots = \Delta_{n}$). Since $\sum_i {1}/{\Delta_{i}^2} = \mathbf{H}$, $\Delta_{i} = \sqrt{n/\mathbf{H}}$ for all $i$.  (\ref{eqn:LBPemin1}) gives
\begin{eqnarray} \nonumber
\P(\widehat{i} \neq i^*) &\geq& \frac{1}{2}\left(1 -  \left(1 -\frac{1}{12} \exp\left({ - \frac{mn}{\mathbf{H}}  }\right)\right)^n \right).
\end{eqnarray}
Recall the total number of samples is given by $nm$.  If $mn \leq  \mathbf{H} (\log n + \log \left( (25\delta)^{-1})\right)$, then for $\delta \in (0, e^{-3}/24 )$
\begin{eqnarray}
\P(\widehat{i} \neq i^*) &\geq& \frac{1}{2}\left(1 -  \left(1 -\frac{25  \delta}{12n}  \right)^n \right) \\
&\geq &\frac{1}{2}\left(1 - \exp\left(-\frac{25  \delta}{12}\right)\right) \quad \mbox{for all $n\geq 1$} \\
&\geq & \delta
\end{eqnarray}
which completes the proof of the first statement of the theorem.

To prove the second statement of the theorem, consider the following set of gaps  -- 
 \begin{eqnarray} \nonumber
 \Delta_{i}  = \begin{cases} 
 \sqrt{\frac{2}{\mathbf{H}}} & i = 1 \\
\sqrt{\frac{2(n-1)}{\mathbf{H}}} & i > 1 .
\end{cases}
 \end{eqnarray}
Note that $\{\Delta_{1},...,\Delta_{n}\}$ satisfy  $\sum_{i=1}^n 1/\Delta_{i}^2 = \mathbf{H}$. From (\ref{eqn:pemidstep1}), and by considering only the arm with the smallest gap, 
\begin{eqnarray} \label{eqn:peonly2}
\P(\widehat{i} \neq i) \geq 
 \frac{1}{2}\left(1 -  \prod_{i \neq i^*}  \left(1 -\frac{1}{12} \exp\left({ - m \Delta_{i}^2 }\right)\right) \right) 
 \geq    \frac{1}{24} \exp\left(-\frac{2m}{\mathbf{H}} \right).
\end{eqnarray}
If $m \leq \frac{\mathbf{H}}{2} \log \left(\frac{{1}}{24\delta} \right)$, we have $ \P\left( \widehat{i} \neq i^* \right) \geq \delta$.
This implies that  if the total number of measurements is less than $ \frac{\mathbf{H} n }{2} \log \left(\frac{{1}}{24\delta} \right) $, then $\mathbb{P}(\widehat{i} \neq i^*) \geq \delta$, completing the proof of the second statement of Thm. \ref{thm:NALB1}.   

\end{proof}

\subsection{Proof of Corollary 2}
\begin{proof}
When $\alpha = 0$, Theorem \ref{thm:NALB1} implies the result.  When $\alpha > 0$, we can bound (\ref{eqn:pemidstep1}) by dropping all terms in the product except the term corresponding to the smallest gap.  This gives 
\begin{eqnarray}
\P(\widehat{i} \neq i) \geq 
 \frac{1}{24} \exp\left({ - m \Delta_{1}^2 }\right)  =  \frac{1}{24} \exp\left({ - m n^{-2\alpha} }\right) 
\end{eqnarray}
Setting $m \leq n^{2\alpha} \log\left( \frac{1}{24\delta}\right)$ implies the result.
\end{proof}

\end{document}